\newtheorem{theorem}{Theorem}
\title{Granular-Ball-Induced Multiple Kernel K-Means}
\author{
    Shuyin Xia$^{1}$,~
    Yifan Wang$^{1}$,~
    Lifeng Shen$^{1}${\thanks{Corresponding Author}},~and 
    Guoyin Wang$^{2}$\\
    \affiliations
    $^{1}$ Chongqing Key Laboratory of Computational Intelligence, 
    Key Laboratory of Cyberspace Big Data Intelligent Security, Ministry of Education, 
    and the Key Laboratory of Big Data Intelligent Computing, College of Computer Science and Technology, Chongqing University of Posts and Telecommunications\\
    $^2$ National Center for Applied Mathematics, Chongqing Normal University\\
    \emails
    xiasy@cqupt.edu.cn,~
    wangyifan421@foxmail.com,~\\ 
    \href{mailto:shenlf@cqupt.edu.cn}{shenlf@cqupt.edu.cn},~
    wanggy@cqnu.edu.cn
}
\begin{document}

\maketitle

\begin{abstract}
Most existing multi-kernel clustering algorithms, such as multi-kernel K-means, often struggle with computational efficiency and robustness when faced with complex data distributions. These challenges stem from their dependence on point-to-point relationships for optimization, which can lead to difficulty in accurately capturing data sets' inherent structure and diversity. Additionally, the intricate interplay between multiple kernels in such algorithms can further exacerbate these issues, effectively impacting their ability to cluster data points in high-dimensional spaces. In this paper, we leverage granular-ball computing to improve the multi-kernel clustering framework.
The core of granular-ball computing is to adaptively fit data distribution by balls from coarse to acceptable levels. 
Each ball can enclose data points based on a density consistency measurement. 
Such ball-based data description thus improves the computational efficiency and the robustness to unknown noises. Specifically, based on granular-ball representations, we introduce the granular-ball kernel (GBK) and its corresponding granular-ball multi-kernel K-means framework (GB-MKKM) for efficient clustering. 
Using granular-ball relationships in multiple kernel spaces, the proposed GB-MKKM framework shows its superiority in efficiency and clustering performance in the empirical evaluation of various clustering tasks. 
\end{abstract}

\section{Introduction}

Clustering, a fundamental task in machine learning, groups data based on inherent similarities \cite{aggarwal2018introduction,duran2013cluster}. Classic methods include K-means \cite{macqueen1967some}, spectral clustering \cite{ng2001spectral}, and density-based clustering \cite{ester1996density}. K-means, widely used for its simplicity, minimizes squared distances to cluster centroids but struggles with nonlinear and arbitrary-shaped data distributions.

To overcome this limitation, various K-means variants have been developed. 
K-means++ \cite{arthur2006k} improves the convergence speed and clustering accuracy by optimizing the selection of initial cluster centers. \cite{mussabayev2023use} proposed an adaptive clustering algorithm that does not require initialization or parameter selection. Although this algorithm somewhat improves robustness to initial values, its performance remains limited when dealing with high-dimensional and non-separable datasets.

Kernel trick is a powerful technique widely used to handle non-linearly separable data. 
For example, kernel K-means (KKM)~\cite{sinaga2020unsupervised} algorithm was proposed by mapping original data to a high-dimensional kernel space, where data can easily be separated linearly, and K-means clustering is subsequently performed. 
In practice, a kernel matrix is required to construct using the inner products of data points in the high-dimensional space, thereby capturing the nonlinear features of the data \cite{abin2015active,yan2023towards,yu2011optimized,girolami2002mercer}. 
However, KKM's performance is highly dependent on the choice of the kernel function, which is often challenging to determine across different datasets. 

To alleviate the limitation of selecting a single kernel function, the multiple kernel clustering (MKC) concept was introduced \cite{chao2021survey,zhao2009multiple,zhang2020deep}. 
The core idea of MKC is to leverage multiple kernel functions to extract complementary information from different feature spaces and optimally fuse these kernel matrices to improve clustering performance. Among MKC algorithms, one of the most popular approaches is the multiple kernel K-means (MKKM) \cite{huang2011multiple}. MKKM learns a weighted combination of multiple kernel functions, and its core optimization objective can be defined as:
\begin{align}
\min_{\gamma \in \Delta} \min_{\mathbf{H} \in \Gamma} \operatorname{Tr} \left[ \mathbf{K}_{\gamma} \left( \mathbf{I} - \mathbf{H} \mathbf{H}^T \right) \right],
\end{align}
where \( \mathbf{H} \) and \( \mathbf{K}_{\gamma} \) represent the clustering partition matrix and the \( \gamma \)-th kernel matrix, respectively. \(\Delta = \{ \boldsymbol{\sigma} \in \mathbb{R}^\omega \mid \sum_{k=1}^\omega \sigma_k = 1, \sigma_k \geq 0, \forall k \}\), where $\omega$ represents the number of kernel functions and $\sigma_k$ is the kernel function's weight. 
MKKM adopts an alternating optimization strategy to update the kernel weights and the clustering partition matrix iteratively. However, this alternating approach easily suffers from the local optima, leading to a degraded overall clustering performance \cite{yao2020multiple,liu2016multiple}.
To further improve the MKKM, \cite{liu2022simplemkkm} proposed SimpleMKKM, which reformulates the min-max formula into a parameter-free minimization problem and designs a streamlined gradient descent algorithm to solve for the optimal solution. Although this method alleviates the local optimum problem, it is still limited as it requires joint optimization over all samples to compute kernel matrices, significantly increasing time and space complexity.

In this paper, we propose an adaptive and robust granular-ball-induced multiple-kernel K-means clustering framework.
It presents the advantages of scalability and clustering performance compared to recent multi-kernel clustering methods. 
Intuitively, granular-ball computing is a data representation technique that can capture complex data structures with a set of balls. 
Using a density-based central consistency measurement \cite{xia2019granular}, granular balls can be adaptively generated from coarse to acceptable fine
granularities. The use of granular balls reduces the amount of data while effectively capturing the distribution information of the data. Based on granular-ball computing, we introduce the granular-ball-induced kernel (GBK) into a multi-kernel K-means clustering framework. Note that the proposed GBK can be play-and-plugged into existing MKKM methods for clustering, significantly improving the efficiency and clustering performance.

\vspace{0.1cm}
In summary, our main contributions are listed as follows:
\begin{itemize}
\item We are the first to introduce a granular-ball-induced kernel (GBK) into multi-kernel K-means clustering that can be easily extended to existing multiple-kernel methods.
\item The proposed GBK effectively reduces storage requirements and computational costs while ensuring superior clustering performance because it allows for reducing the kernel matrix size from ``the number of samples \( n \)'' to ``the number of granular-balls \( m \)'' ($m \ll n$). 
\item Experimental results verify its effectiveness across various commonly used multi-kernel clustering datasets.
\end{itemize}

\section{Related Work}

\subsection{Multiple Kernel Clustering}

To address the issue of single kernel insufficiency, multiple kernel K-means (MKKM), as an algorithm combining multiple kernel learning and K-means clustering, improves clustering performance by integrating multiple kernel functions. Based on classical MKKM, recent studies have made improvements from different perspectives, including selecting kernels based on kernel correlation \cite{gonen2014localized}, introducing local adaptive kernel fusion \cite{liu2016multiple}, optimizing kernel alignment criteria \cite{cortes2012algorithms}, enhancing robustness \cite{tao2018reliable}, simplifying the optimization process \cite{zhang2021late}, and removing noisy kernels \cite{li2023simple}, etc.
For instance, LMKKM \cite{gonen2014localized} was proposed better to capture sample-specific features through local adaptive kernel fusion. RMKKM \cite{du2015robust} adopted the $l_{2}$-norm to improve resistance to outliers, achieving joint optimization of clustering labels, multi-kernel combinations, and memberships through alternating optimization. 
Besides, ONKC \cite{liu2017optimal} was introduced to enhance the representation capability of the optimal kernel. A multi-view clustering method, LF-MVC \cite{10011211}, based on SimpleMKKM was further studied, which optimized a new objective function using an efficient two-step optimization strategy. Moreover, a multiple kernel clustering method with multi-scale partition selection \cite{10529609} was developed that dynamically removes noisy kernels, and significantly improves performance.

One of the main limitations of the above MKKM methods is that they require constructing kernel matrices for all samples and performing optimization, which results in high time and space costs. In this work, we use the distribution information of samples in multiple kernel spaces to enhance efficiency and clustering accuracy.

\subsection{Granular-ball Computing}

Granular-ball computing (GBC) \cite{xia2019granular} is a helpful tool for improving efficiency by covering data at multiple granularities. 
This aligns with the strategy of ``human cognition prioritizing global features before processing details" proposed in \cite{chen1982topological} and the concept of multi-granularity cognitive computing \cite{wang2017dgcc}. 

\begin{figure}[t]
    \centering
    \begin{subfigure}[b]{0.45\textwidth}
    \centering    \includegraphics[width=\textwidth, trim=0 0 0 0, clip]{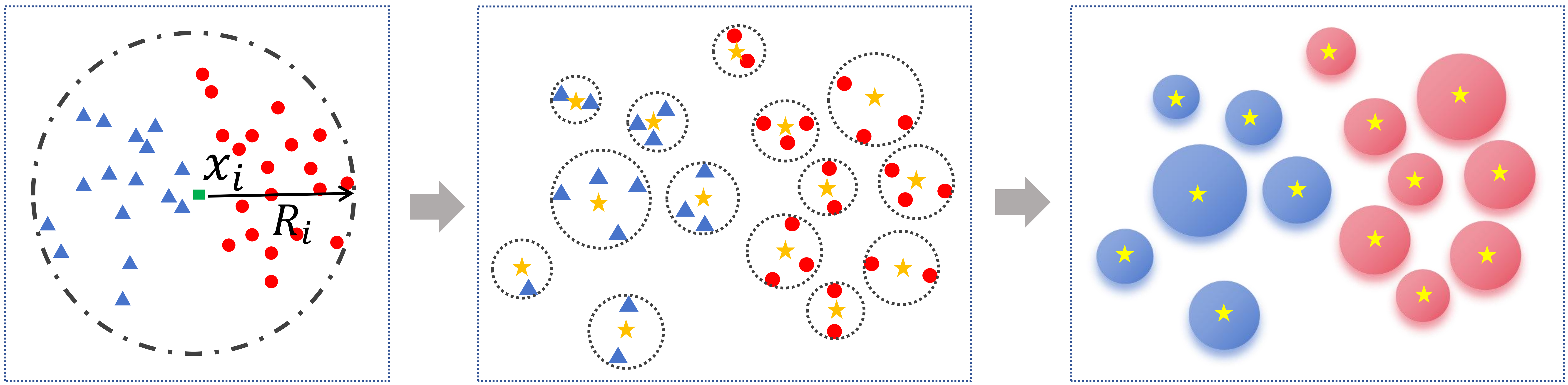}
    \end{subfigure} 
    \caption{\small Illustration of granular ball generation.}
    \label{granular}
\end{figure}

Figure \ref{granular} shows how granular balls cover and represent data sets, which provides an intuitive macroscopic perspective for understanding granular computing.
Granular-ball computing has been studied in various applications, including accelerating unsupervised K-means \cite{xia2020fast}, enhancing density-based clustering like DPC and DBSCAN \cite{cheng2023fast}, \cite{cheng2024gb}, and addressing high-dimensional challenges through granular-ball-weighted K-means and manifold learning \cite{xie2024w,liu2024granular}. 
In spectral clustering, granular-ball methods exploit data structure for efficiency \cite{xie2023efficient}. Recent advancements include the granular-ball clustering algorithm \cite{xia2024gbct}, introducing center consistency for complex data, and fuzzy theory integration to resolve overlapping boundaries from concept drift \cite{xie2024efficient}. Moreover, Granular-ball computing has recently been applied to point cloud registration \cite{hu2025gricp} and multi-view contrastive learning \cite{su2025multi}. 
These successes demonstrate granular balls' powerful representation and generalization capabilities, confirming their potential for multi-view learning. 

\begin{figure*}[t!]
    \centering
    \begin{subfigure}[b]{1\textwidth}
    \centering  \includegraphics[width=0.9\textwidth, trim=0 0 0 0, clip]{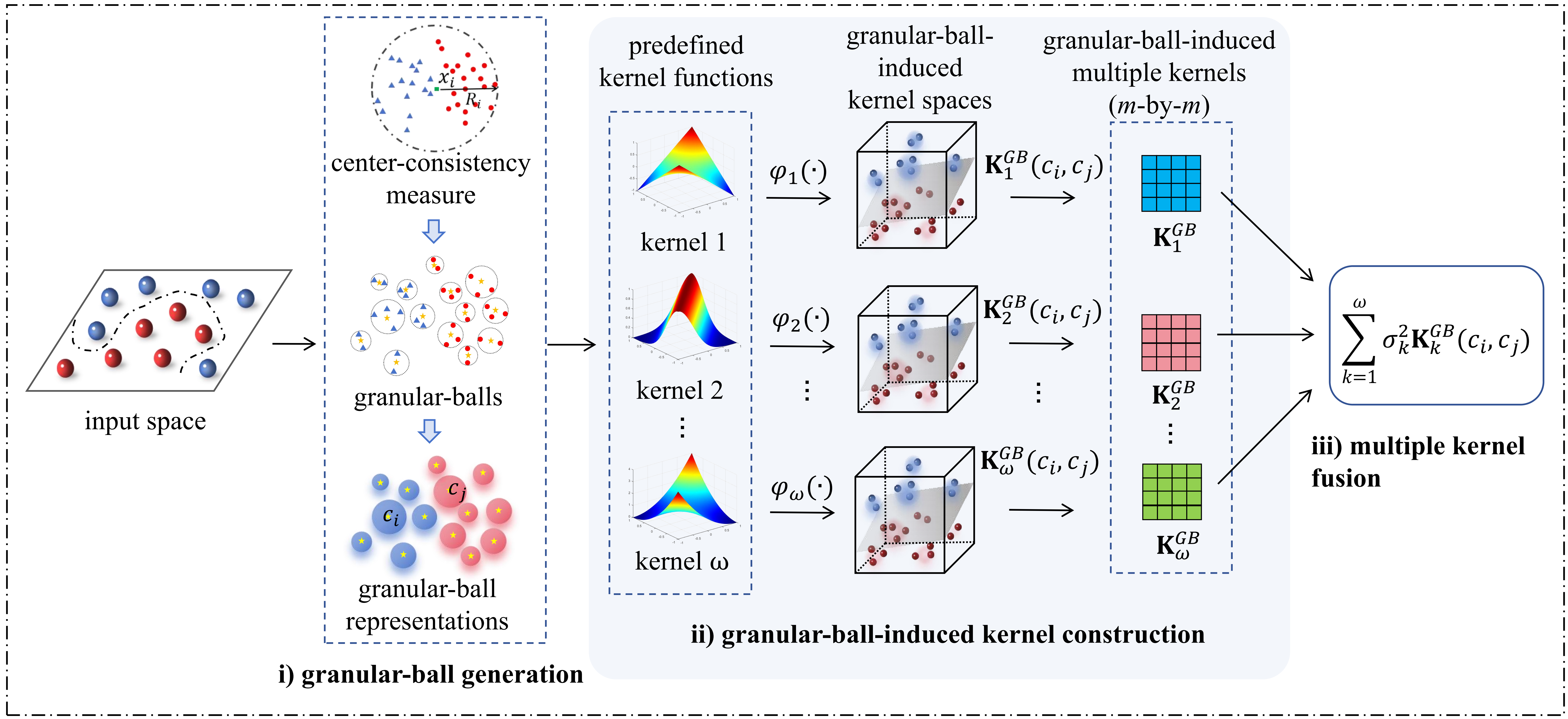}
    \end{subfigure}   
    \caption{{Procedure of granular-ball-induced multiple kernel construction.}}
    \label{process}
\end{figure*}

\section{Methodology}

This section formally elaborates on the proposed granular-ball-induced multiple kernel clustering framework. 
It mainly includes four steps: 
i) the granular-ball generation step, ii) the granular-ball-induced kernel construction, iii) the multiple kernel fusion step, and iv) the optimization of multi-kernel clustering. 
The granular-ball-induced multiple kernel construction procedure (the first three steps) is shown in Figure \ref{process}.
Through the above steps, the proposed method significantly improves the multi-kernel optimization efficiency (smaller number of granular balls) and achieves better clustering performance (granular balls fit the data distribution well and exclude potential noises). 

\subsection{Granular-ball Generation}

Let $\mathcal{D}\!=\!\{\mathbf{x}_u\in\mathbb{R}^d, u\!=\!1, 2, \dots, n\}$ be a dataset where $n$ and $d$ denote the number of samples and the dimensionality of the data, respectively. The subscript $u$ indicates the sample index. 
The granular-ball set is defined as $GB = \{GB_1, GB_2, \ldots, GB_m\}$, where $m$ granular balls cover and describe the dataset $\mathcal{D}$. We use $i$ to denote the $i$-th ball where $i\in\{1,2,\dots,m\}$. The ball $GB_i$'s size is $S_i$. 

\noindent\textbf{Definition 1. Granular-ball Representation:} For the $i$-th $GB_i$, its center $\mathbf{c}_i$ is given by $\mathbf{c}_i = \frac{1}{S_i} \sum_{u=1}^{S_i} \mathbf{x}_u$.
Its radii (maximum radius $r_{\text{max}}$ and average radius $r_{\text{ave}}$) are
\begin{equation}
    r_{\text{max}} = \max_{\mathbf{x}_u \in GB_i}(\|\mathbf{x}_u - \mathbf{c}_i\|), \ \ r_{\text{ave}} = \frac{1}{s_i} \sum_{u=1}^{s_i} \|\mathbf{x}_u - \mathbf{c}_i\|.
\end{equation}

\noindent\textbf{Definition 2. Center-Consistency Measure:} For the $i$-th granular-ball $GB_i$, its center-consistency measure \cite{xia2024gbct} is
\begin{equation}\label{eq:CCM}
    \text{CCM}_{GB_i} = \frac{|\chi_i|}{r_{\text{ave}}} / \frac{S_i}{r_{\text{max}}}=
    \frac{|\chi_i| \cdot r_{\text{max}}}{r_{\text{ave}} \cdot S_i}, 
\end{equation}  
where~$\chi_i = \{\mathbf{x}_u \mid \|\mathbf{x}_u - \mathbf{c}_i\| \leq r_{\text{ave}}\}$. 
Equation (\ref{eq:CCM}) means the density ratio between the average radius and the maximum radius within a granular ball. Intuitively, a smaller ratio means a lower sparsity level of
sample distribution within a granular ball. The criterion for splitting a granular ball is determined by comparing its CCM value with the median CCM of all balls. Specifically, if $\text{CCM}_{GB_i} < \lambda\cdot\text{CCM}_{\text{median}}$ ($\lambda$ is two empirically), the granular ball will be split; otherwise, the splitting process stops.
The algorithm of granular ball generation is provided in Section 1 of the Appendix.

\subsection{Calculation of Granular-ball Kernel}

Kernel methods embed the data into a new feature space via projections to uncover the nonlinear relationships among the data. For the dataset 
\( \mathcal{D}\), 
we consider a set of \( \omega \) projection functions 
\( \Phi = \{\phi_1, \phi_2, \dots, \phi_\omega\} \). 
Each projection \( \phi_k \) $(k=1,2,\dots,w)$ encodes the data into the feature space as a new vector \( \phi_k(\mathbf{x}) \). 
Let \( \{\mathbf{K}_1, \mathbf{K}_2, \dots, \mathbf{K}_\omega\} \) denote the Mercer kernel matrices corresponding to these implicit projections:
\begin{equation}
\mathbf{K}_k(\mathbf{x}_u, \mathbf{x}_v) = \phi_k(\mathbf{x}_u)^T \phi_k(\mathbf{x}_v).
\end{equation}

We embed granular balls into the kernel space and derive the Granular-ball Kernel (GBK), defined as follows: 
\begin{align}\label{eq:bgk_defn}
\mathbf{K}^{GB}_{k}(\mathbf{c}_i, \mathbf{c}_j) =& \phi_k(\mathbf{c}_i)^T \phi_k(\mathbf{c}_j)\nonumber \\
\approx &\left[\frac{1}{S_i} \sum_{u=1}^{S_i} \phi_k(\mathbf{x}_u)\right]^T \left[\frac{1}{S_j} \sum_{v=1}^{S_j} \phi_k(\mathbf{x}_v)\right] \\
=& \frac{1}{S_i S_j} \sum_{u=1}^{S_i} \sum_{v=1}^{S_j} \mathbf{K}_k(\mathbf{x}_u, \mathbf{x}_v),\nonumber
\end{align}%
where $i,j\in \{1,2,...,m\}$. 
Kernel mappings of samples
within the same granular ball are approximately equal due to
their proximity in the input space. This aligns with the local smoothness assumption in kernel methods, where nearby
points have similar kernel mappings. 
Thus, in (\ref{eq:bgk_defn}), the mean of the mappings of the samples $\{\mathbf{x}_u\}$ within the same ball can estimate the granular-ball center $\mathbf{c}_i$'s kernel mapping:  
\begin{align}
\phi_k(\mathbf{c}_i) 
\approx \frac{1}{S_i} \sum_{u=1}^{S_i} \phi_k(\mathbf{x}_u). 
\end{align}

Intuitively, each granular ball can be seen as a small local distribution or cluster, and the kernel mappings of its internal points are expected to concentrate around a common region in the kernel-induced feature space. Let granular balls \( GB_i \) and \( GB_j \) be regarded as distributions \(P\) and \( Q \) respectively, then the granular-ball kernel reflects the expected kernel:
\begin{align}
\mathbb{E}_{\mathbf{x}_u \sim P, \mathbf{x}_v \sim Q} [\mathbf{K}_{k}(\mathbf{x}_u,
\mathbf{x}_v)])\nonumber
\Rightarrow  &
\mathbf{K}_{k}(\mathbb{E}_{[\mathbf{x}_u \sim P]} \mathbf{x}_u, \mathbb{E}_{[\mathbf{x}_v \sim Q]} \mathbf{x}_v)\\
=& \mathbf{K}^{GB}_{k}(\mathbf{c}_i, \mathbf{c}_j). 
\end{align}
Thus, in practice, we have:
\begin{align}
\mathbf{K}^{GB}_{k}(\mathbf{c}_i, \mathbf{c}_j)=\frac{1}{S_{i}S_{j}}  {\sum_{u=1}^{S_{i}}} {\sum_{v=1}^{S_{j}}} \mathbf{K}_{k}(\mathbf{x}_u, \mathbf{x}_v).  
\end{align}

\begin{theorem}
The granular-ball kernel \( \mathbf{K}^{GB}_{k}(\mathbf{c}_i, \mathbf{c}_j)  \) satisfies the Mercer condition. 1. \( \mathbf{K}^{GB}_{k}(\mathbf{c}_i, \mathbf{c}_j)  \) is symmetric. 2. For any 
\( \mathbf{p}\), 
we have \( \mathbf{p}^T \mathbf{K}^{GB}_{k} \mathbf{p} \geq 0 \).
\end{theorem}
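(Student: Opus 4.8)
The plan is to verify the two defining properties of a Mercer kernel matrix directly from the definition in Equation~(\ref{eq:bgk_defn}), leveraging the assumption that each base kernel $\mathbf{K}_k$ is itself a Mercer kernel. Symmetry is immediate: since $\mathbf{K}_k(\mathbf{x}_u,\mathbf{x}_v)=\mathbf{K}_k(\mathbf{x}_v,\mathbf{x}_u)$ for the base kernel, swapping the roles of $i$ and $j$ in the double sum $\frac{1}{S_iS_j}\sum_u\sum_v\mathbf{K}_k(\mathbf{x}_u,\mathbf{x}_v)$ yields $\mathbf{K}^{GB}_k(\mathbf{c}_i,\mathbf{c}_j)=\mathbf{K}^{GB}_k(\mathbf{c}_j,\mathbf{c}_i)$ with no computation required.

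The substantive part is positive semi-definiteness, and the key observation I would exploit is that the GBK is obtained from the full $n\times n$ base Gram matrix by an averaging (congruence) transformation. Specifically, I would introduce an $m\times n$ aggregation matrix $\mathbf{A}$ whose $(i,u)$ entry equals $1/S_i$ when $\mathbf{x}_u\in GB_i$ and $0$ otherwise, so that the definition rewrites compactly as $\mathbf{K}^{GB}_k=\mathbf{A}\,\mathbf{K}_k\,\mathbf{A}^T$, where here $\mathbf{K}_k$ denotes the Gram matrix of all samples $\{\mathbf{x}_u\}$. For any vector $\mathbf{p}$ I would then set $\mathbf{q}=\mathbf{A}^T\mathbf{p}$ and compute $\mathbf{p}^T\mathbf{K}^{GB}_k\mathbf{p}=\mathbf{q}^T\mathbf{K}_k\mathbf{q}\ge 0$, where the final inequality is exactly the PSD property of the base kernel applied to the sample set. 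This single congruence identity does essentially all the work.

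Equivalently, and perhaps more transparently, I would define per-ball mean embeddings $\boldsymbol{\mu}_i=\frac{1}{S_i}\sum_{u=1}^{S_i}\phi_k(\mathbf{x}_u)$ in the feature space, so that $\mathbf{K}^{GB}_k(\mathbf{c}_i,\mathbf{c}_j)=\boldsymbol{\mu}_i^T\boldsymbol{\mu}_j$ is manifestly a Gram matrix; the quadratic form then collapses to the squared norm $\big\|\sum_i p_i\boldsymbol{\mu}_i\big\|^2\ge 0$. I expect the only point requiring care---rather than a genuine obstacle---to be that for a general Mercer kernel the feature map $\phi_k$ may be infinite-dimensional, so this norm argument should be understood in the associated Hilbert space. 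The congruence formulation $\mathbf{K}^{GB}_k=\mathbf{A}\mathbf{K}_k\mathbf{A}^T$ avoids this subtlety entirely by operating only on the finite PSD matrix $\mathbf{K}_k$, so that is the route I would present as the main line of argument.
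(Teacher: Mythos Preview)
Your proposal is correct and follows essentially the same approach as the paper: reduce PSD of $\mathbf{K}^{GB}_k$ to PSD of the base Gram matrix $\mathbf{K}_k$ by constructing from $\mathbf{p}\in\mathbb{R}^m$ a length-$n$ vector (your $\mathbf{q}=\mathbf{A}^T\mathbf{p}$, with entries $p_i/S_i$ on the samples in $GB_i$) and applying $\mathbf{q}^T\mathbf{K}_k\mathbf{q}\ge 0$. Your aggregation-matrix formulation $\mathbf{K}^{GB}_k=\mathbf{A}\mathbf{K}_k\mathbf{A}^T$ is in fact a cleaner packaging of the paper's explicit-sum argument.
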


\begin{proof}
\textbf{Symmetry}: Since \( \mathbf{K}_k(\mathbf{x}_u, \mathbf{x}_v)= \mathbf{K}_k(\mathbf{x}_v, \mathbf{x}_u)\) satisfies symmetry, it follows that:
\begin{align}
\mathbf{K}^{GB}_{k}(\mathbf{c}_i, \mathbf{c}_j)  
= \frac{1}{S_j S_i} \sum_{v=1}^{S_j} \sum_{u=1}^{S_i} \mathbf{K}_k(\mathbf{x}_v, \mathbf{x}_u)=\mathbf{K}^{GB}_{k}(\mathbf{c}_j, \mathbf{c}_i).\notag
\end{align}
\textbf{Positive Semi-Definiteness}: 
Let $\mathbf{p}=[p_1,p_2,\dots,p_m]$ and $p_i=p'_i/S_i$.
Since $\mathbf{K}_{k}$ is a kernel and it is positive semi-definite, we have \( \mathbf{p}'^T \mathbf{K}_{k} \mathbf{p}' \geq 0 \).  
Then, 
\begin{align}\label{eq:positive}
   \mathbf{p}^T \mathbf{K}^{GB}_{k} \mathbf{p} &=\sum_{i =1}^m \sum_{j =1}^m p_i {p}_j\mathbf{K}^{GB}_{k}(\mathbf{c}_i, \mathbf{c}_j)\notag\\
   &=\sum_{i =1}^m \sum_{j =1}^m\sum_{u=1}^{S_i} \sum_{v=1}^{S_j}\frac{p_i}{S_i}\frac{p_j}{S_j}\cdot\mathbf{K}_k(\mathbf{x}_u, \mathbf{x}_v)\notag\\
   &=\sum_{i =1}^n \sum_{j =1}^n p'_i {p}'_j\mathbf{K}_{k}(\mathbf{x}_i, \mathbf{x}_j)\geq 0.
\end{align} 
Thus, \( K^{GB}_{k}\) satisfies the Mercer condition.
\end{proof}

\subsection{Multiple Kernel Fusion}

Based on the granular-ball kernel \( \mathbf{K}^{GB}_{k}(\mathbf{c}_i, \mathbf{c}_j) \) in Equation (\ref{eq:bgk_defn}), we construct a granular-ball multi-kernel, defined as: 
\begin{align}
    \mathbf{K}_{\boldsymbol{\sigma}}^{GB} = \sum_{k=1}^\omega \sigma_k^2 {\mathbf{K}}_k^{GB}, 
\end{align}
where $\sigma_k^2$ is the kernel weight and \( \boldsymbol{\sigma} \in \Delta \). \(\Delta\) represents the constraints for kernel weights and \(\quad \Delta = \{ \boldsymbol{\sigma} \in \mathbb{R}^\omega \mid \sum_{k=1}^\omega \sigma_k = 1, \sigma_k \geq 0, \forall k \}\). 
Here, \(  \mathbf{K}_{\boldsymbol{\sigma}}^{GB} \) will be used in the subsequent establishment of the granular-ball-induced multi-kernel clustering model.

\begin{algorithm}[t!] \footnotesize
    \caption{GB-SMKKM Algorithm}
    \label{alg:GBMKKM}
    \textbf{Input}: Granular-ball kernel matrix $\{\bar{\mathbf{K}}_k^{GB}\}_{k=1}^\omega$, initial ${t}=1$. \\
    \textbf{Output}: Optimized $\sigma$.

    \begin{algorithmic}[1] 
        \STATE Initialize $\sigma^{(1)} = \frac{1}{\omega}$ and $flag=1$.
        \WHILE{$flag$ holds}
            \STATE Compute $\mathbf{K}^{GB}_{\sigma^{(t)}} = \sum_{k=1}^\omega (\sigma_k^{(t)})^2 \bar{\mathbf{K}}_k^{GB}$ and obtain $\mathbf{H}_{gb}^{(t)}$.
            \STATE Compute $\frac{\partial \mathbf{K}_\sigma}{\partial \sigma_k}$ and obtain descent direction $d^{(t)}$ via gradient descent in Equation (\ref{eq:obj4}).
            \STATE Update $\sigma^{(t+1)} \gets \sigma^{(t)} + \alpha d^{(t)}$.
            \IF{$\max|\sigma^{(t+1)} - \sigma^{(t)}| \leq e^{-4}$}
                \STATE Set $flag=0$.
            \ENDIF
            \STATE $t \gets t+1$.
        \ENDWHILE
        \STATE \textbf{return} $\sigma$.
    \end{algorithmic}
\end{algorithm}

\subsection{GB-induced Multiple Kernel K-means}

In multiple kernel clustering, the kernel matrices and the clustering partition matrix are the keys to clustering optimization. 
The granular-ball clustering partition matrix \( \mathbf{H}_{gb} \in \mathbb{R}^{m \times J} \) is a binary matrix recording the assignment information of granular balls to clusters, defined as: 
\begin{equation}\label{eq:assignment}
\resizebox{.861\linewidth}{!}{$
\displaystyle
\mathbf{H}_{gb}(i, s) =
\begin{cases} 
1, & \text{if}~GB_i \text{ belongs to the } s\text{-th cluster,} \\
0, & \text{otherwise.}
\end{cases} $}
\end{equation}

To balance the influence of different kernels and ensure the rationality of sample assignments, we optimize the model by minimizing the following objective for the kernel coefficient $\sigma_k$ ($k=1,2,...,w$):
\begin{equation}\label{eq:obj1}
\min_{\sigma \in \Delta}   \sum_{k=1}^{\omega} \sigma_k \sum_{i=1}^m \sum_{s =1}^J R_{k,i,s}.
\end{equation}
where $R_{k,i,s}=
  \mathbf{K}_k^{GB}(\mathbf{c}_i, \mathbf{c}_i) - 2\mathbf{K}_k^{GB}(\mathbf{c}_i, \mathbf{z}_s) + \mathbf{K}_k^{GB}(\mathbf{z}_s, \mathbf{z}_s)$.

Here, $J$ clusters are considered. The center of the $s$-th cluster is denoted as $\mathbf{z}_s$. 
In the objective, the first term is the self-similarity of the granular-ball $GB_i$ in the kernel space.
The second term measures the similarity between $GB_i$ and the cluster center $ \mathbf{z}_s$, while the third term denotes the cluster center's self-similarity kernel.

To simplify the objective function, we omit explicit cluster centers in Equation (\ref{eq:obj1}) and reformulate the objective into a matrix form. 
Thereby, we can construct a granular-ball-based multi-kernel clustering model (GB-SMKKM): 
\begin{equation}\label{eq:obj3}
\min_{\sigma \in \Delta} \max_{\mathbf{H}_{gb} \in \Omega} \operatorname{Tr} \left[ \mathbf{K}^{GB}_{\sigma} ( \mathbf{H}_{gb} \mathbf{H}_{gb}^T - \mathbf{I}_m ) \right],   
\end{equation}
where $\Omega = \{ \mathbf{H}_{gb} \in \mathbb{R}^{m \times J} | \mathbf{H}_{gb}^T \mathbf{H}_{gb} = \mathbf{I}_J \}$. 
Further, we have 
\begin{equation}\label{eq:obj4}
\min_{\sigma \in \Delta} \max_{\mathbf{H}_{gb} \in \Omega} \operatorname{Tr} \left[ \mathbf{K}^{GB}_{\sigma} \mathbf{H}_{gb} \mathbf{H}_{gb}^T \right] - \operatorname{Tr} (\mathbf{K}^{GB}_{\sigma} \mathbf{I}_m).
\end{equation}
Algorithm \ref{alg:GBMKKM} presents the GB-SMKKM optimization procedure. 
Section 2 of the {Appendix} provides proof to show that the above objective is differentiable and convex.

\subsection{Computational Complexity}

In this work, $n$ represents the number of samples, $m$ denotes the number of granular balls, $\omega$ denotes the number of kernel functions, and $k$ represents the number of clusters. 
The main computational cost of the proposed method involves the construction of the ($m \times m$) kernel matrix $\mathbf{K}^{GB}_\sigma$ and the optimization of the assignment matrix $\mathbf{H}_{gb}$.
Specifically, for $\omega$ kernel matrices, traversing them for calculation incurs a $\mathcal{O}(\omega m^2)$ complexity. 
Then, for the maximum optimization step and clustering assignment matrix computation, each update of $\mathbf{H}_{gb} \in \Omega$ requires solving a positive semi-definite optimization problem, where the largest eigenvalue is calculated once using K-means clustering. Hence, this step incurs a complexity of $\mathcal{O}(m^3 + m^2 kt)$, where $t$ is the number of optimization iterations. The computational bottleneck lies in the matrix decomposition step, which incurs a $\mathcal{O}(m^3)$ complexity. In contrast, MKKM operates on the complete $n \times n$ kernel matrix, where the matrix decomposition has a $\mathcal{O}(n^3)$ complexity. This highlights the advantage of the granular-ball kernel approach, as it reduces the $\mathcal{O}(n^3)$ term to $\mathcal{O}(m^3)$, where $m \ll n$.  
Finally, when the optimization process for the GB-SMKKM objective function requires $T$ iterations, the total time complexity is: $\mathcal{O}\big(T[\omega m^2 + m^3 + m^2 k t]\big).$

\section{Experiments}
In this section, we conduct the following experiments, including 
evaluating the clustering performance of the proposed granular-ball-induced multiple kernel clustering against recent strong multiple kernel baselines on nine single-view datasets and three multiple-view datasets; comparing different anchor-sampling methods in multi-kernel clustering frameworks to verify the effectiveness of using granular-ball computing;
and analyzing optimization convergence and computational cost. 
The experiments are conducted using MATLAB R2022a \footnote{Source code and supplementary materials are available at: \url{https://github.com/WangYifan4211115/GB-MKKM}}.  

\begin{table}[t!] 
\centering
\resizebox{\columnwidth}{!}{%
\begin{tabular}{@{}lccccc@{}}
\toprule
\textbf{Datasets} & \textbf{\# Samples} & \textbf{Dimension} & \textbf{Cluster} & \textbf{View} & \textbf{Type}\\ 
\midrule
GLIOMA  & 50   & 4,434                & 4  & 1 & gene\\
Srbct   & 63   & 2,308                & 4  & 1 & gene\\ 
YALE    & 165  & 1,024                & 15 & 1 & image\\
JAFFE   & 213  & 676                 & 10 & 1 & image\\ 
ORL     & 400  & 1,024                & 40 & 1 & image\\ 
WDBC    & 569  & 30                  & 2  & 1 & bio.\\
WBC     & 683  & 9                   & 2  & 1 & bio.\\ 
TR41    & 878  & 7,454                & 10 & 1 & text\\ 
DS8     & 1,280 & 2                   & 3  & 1 & noise\\ \midrule
Caltec101-7 & 1,474 & 48,~40,~254,~1984,~512,~928 & 7 & 6 & image\\ 
Mfeat   & 2,000 & 216,~76,~64,~6,~240,~47 & 10 & 6 & image\\ 
Handwritten & 10,000 & 784,~256        & 10 & 2 & image\\ 
\bottomrule
\end{tabular}%
}
\caption{Dataset information.}
\label{tab1}
\end{table}

\subsection{Setup}

\noindent\textbf{Datasets.} We evaluated the proposed algorithm on 12 datasets.
The detailed information of datasets is summarized in Table \ref{tab1}.
As can be seen, these datasets differ significantly in terms of sample size, multi-view attributes, dimensionality, and the number of categories, making them highly representative and capable of comprehensively verifying the applicability and robustness of the algorithm. 

\noindent\textbf{Metrics.} We adopt three widely-used external evaluation metrics  \cite{rand1971objective} to measure clustering performance: 
i) Adjusted Rand Index (ARI), which evaluates the similarity between the predicted and true cluster assignments while adjusting for chance; ii) Normalized Mutual Information (NMI), which quantifies the amount of information shared between the predicted clusters and the ground truth; 
and iii) Clustering Accuracy (ACC) directly measures the proportion of correctly classified samples. To ensure the reliability of the results and reduce the impact of randomness, each method is repeated 20 times, and the average results are reported.

\noindent\textbf{Baselines.} 
In the experiments, the following recent strong baselines are included for comparison.
\begin{itemize}[left=0pt]
\item Approximate kernel K-means (RKKM-a) approximates kernel K-means using a low-rank matrix decomposition, significantly reducing the computational cost.(\cite{Chitta2011ApproximateKK}, SIGKDD)  
\item Affinity aggregation for spectral clustering (AASC) aggregates affinity matrices to enhance spectral clustering. (\cite{huang2012affinity}, CVPR)  
\item Multiple kernel K-means (MKKM) combines multiple kernels into a consensus kernel to enhance alignment and performance. (\cite{huang2011multiple}, IEEE TFS)  
\item Robust multiple kernel K-means (RMKKM) leverages the \( L_{2,1} \)-norm to handle noise and outliers. (\cite{du2015robust}, IJCAI)
\item Simple multiple kernel K-means (Simple-MKKM) simplifies optimization by introducing a streamlined min-max formulation with fewer parameters. (\cite{liu2022simplemkkm}, IEEE TPAMI) 
\item Multiple kernel K-means clustering with simultaneous spectral rotation (MKKM-SR) integrates spectral rotation with multiple kernel K-means to align the clustering structure across kernels. (\cite{lu2022multiple}, ICASSP) 
\item Scalable multiple kernel clustering (SMKC) scales multiple kernel clustering by learning consensus clustering structures from expectation kernels. (\cite{liangscalable}[2024], ICML) 
\end{itemize}

Furthermore, we introduce the proposed granular-ball-induced kernel into the baselines (MKKM, MKKM-SR, and SMKKM). 
For a fair comparison, we uniformly employ three types of kernels (with various parameters): Linear, Polynomial, and Gaussian, resulting in 6 kernel functions. It is noteworthy that for the comparison with RMKKM, we adopt the optimal parameters and 12 kernel functions as used in their original paper \cite{du2015robust}.

\subsection{Results on Single-view Clustering}
Table \ref{tab2} reports comparison results averaged on nine single-view clustering tasks regarding ACC, NMI, and ARI metrics. 
As can be seen, the proposed GB-SMKKM consistently achieves the best results in the three metrics.

\begin{table}[h!]
\centering
\resizebox{0.9\columnwidth}{!}{%
\begin{tabular}{lccc}
\midrule[1pt]
\textbf{Method}     & \textbf{ACC$_{avg}$}(\%) & \textbf{NMI$_{avg}$(\%)} & \textbf{ARI$_{avg}$}(\%)\\ \midrule
Random Selection    & 58.84      & 45.12     &28.87           \\ 
K-means     & 64.73      & 52.65     &40.78           \\ 
BKHK     & 67.21     & 58.40      &48.10  \\ 
RKKM-a     & 62.10      & 50.42     &66.61           \\ 
AASC     & 49.48     & 31.19      &52.88  \\ 
RMKKM    & 71.98   & 62.58      &75.36 \\ \midrule
MKKM   & 72.79     & 62.91      &78.30  \\ 
GB-MKKM  & \textbf{73.84}   & \textbf{66.38}  &\textbf{79.63}     \\ 
{\color{blue} \texttt{improvements} $\uparrow$}  & {\color{blue}{1.05}} & \color{blue} 3.47 & \color{blue} 1.33\\\midrule
MKKM-SR   & 66.54   & 51.58  &71.35   \\ 
GB-MKKM-SR  & \textbf{71.93}  & \textbf{63.37}  & \textbf{77.19}  \\ 
{\color{blue} \texttt{improvements} $\uparrow$}  & \color{blue} 5.59 & \color{blue}11.79 & \color{blue}5.84\\ \midrule
SMKKM  & 73.71  & 62.58  &77.97   \\ 
GB-SMKKM  & \textbf{80.54}  & \textbf{76.51}  &\textbf{83.90} \\ 
{\color{blue} \texttt{improvements} $\uparrow$}  & \color{blue}6.83 & \color{blue}13.93& \color{blue}5.93\\ 
\midrule
\end{tabular}
}
\caption{Comparison results in terms of ACC, NMI, and ARI averaged on nine datasets.}
\label{tab2}
\end{table}

\begin{figure*}[t]
    \centering
    \begin{subfigure}[b]{0.47\textwidth} 
    \centering    \includegraphics[width=\textwidth, trim=0 0 0 0, clip]{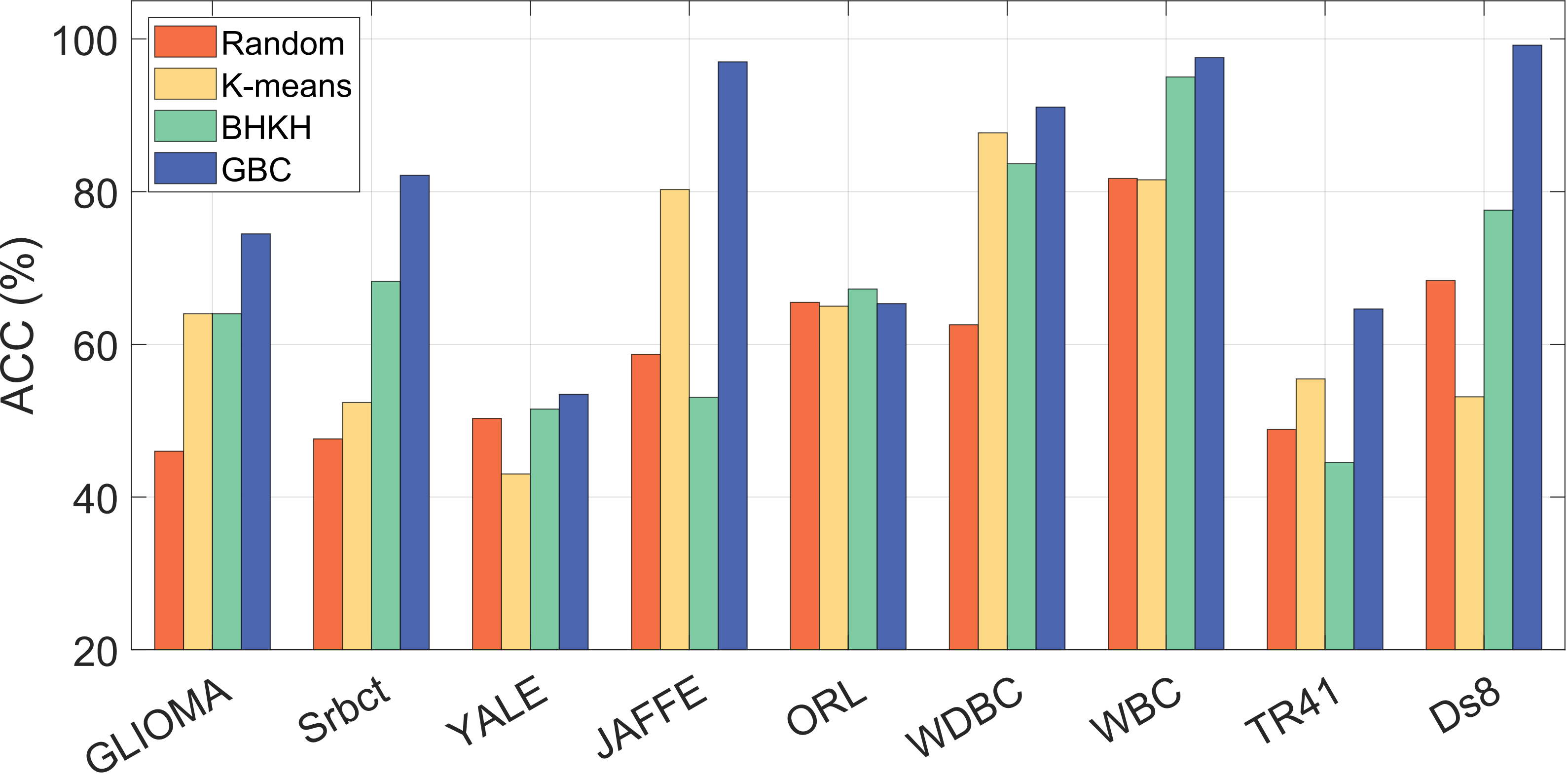}
    \end{subfigure}
    \begin{subfigure}[b]{0.47\textwidth}
    \centering    \includegraphics[width=\textwidth, trim=0 0 0 0, clip]{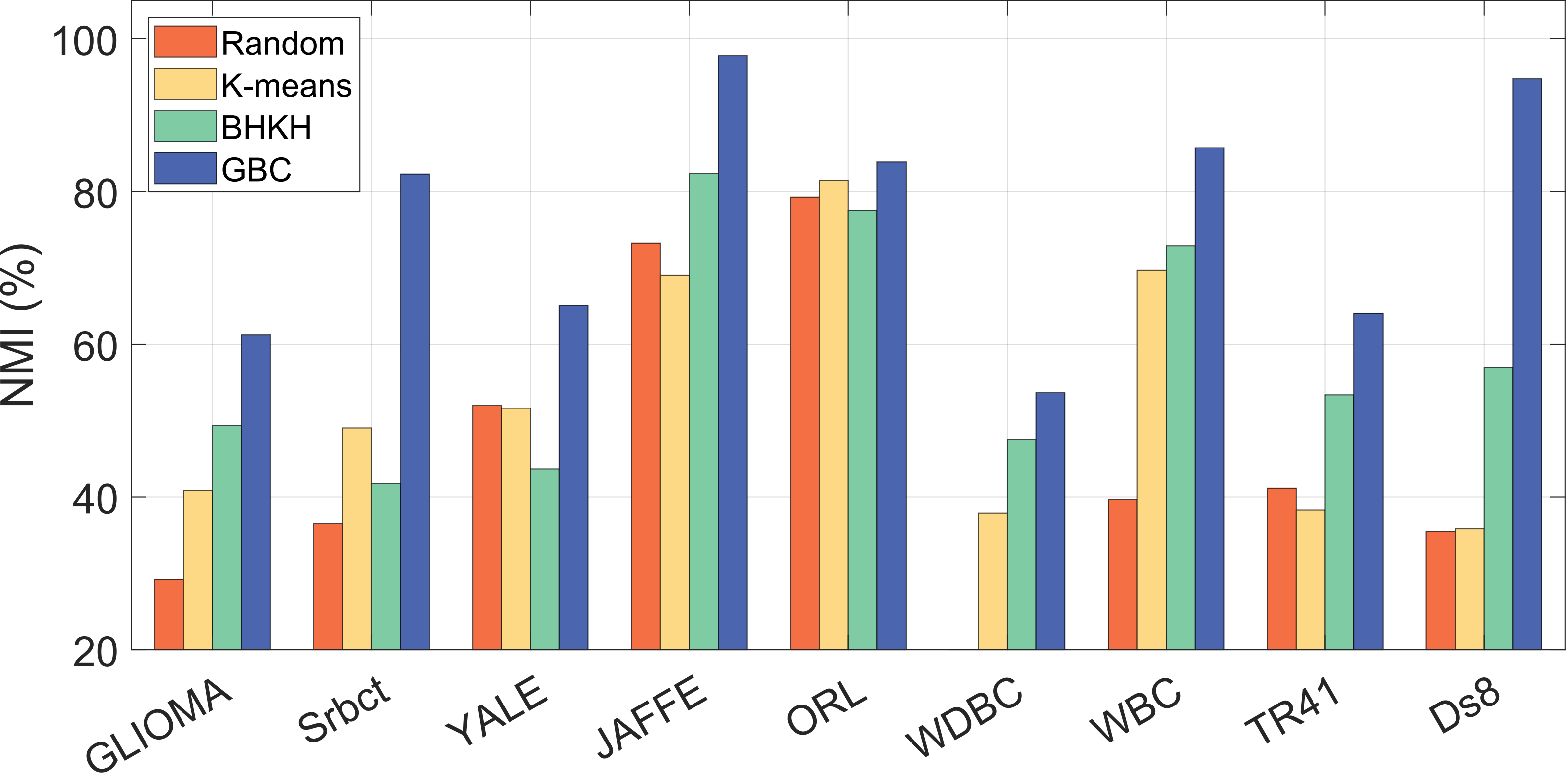}
    \end{subfigure}
    \begin{subfigure}[b]{0.47\textwidth}
    \centering    \includegraphics[width=\textwidth, trim=0 0 0 0, clip]{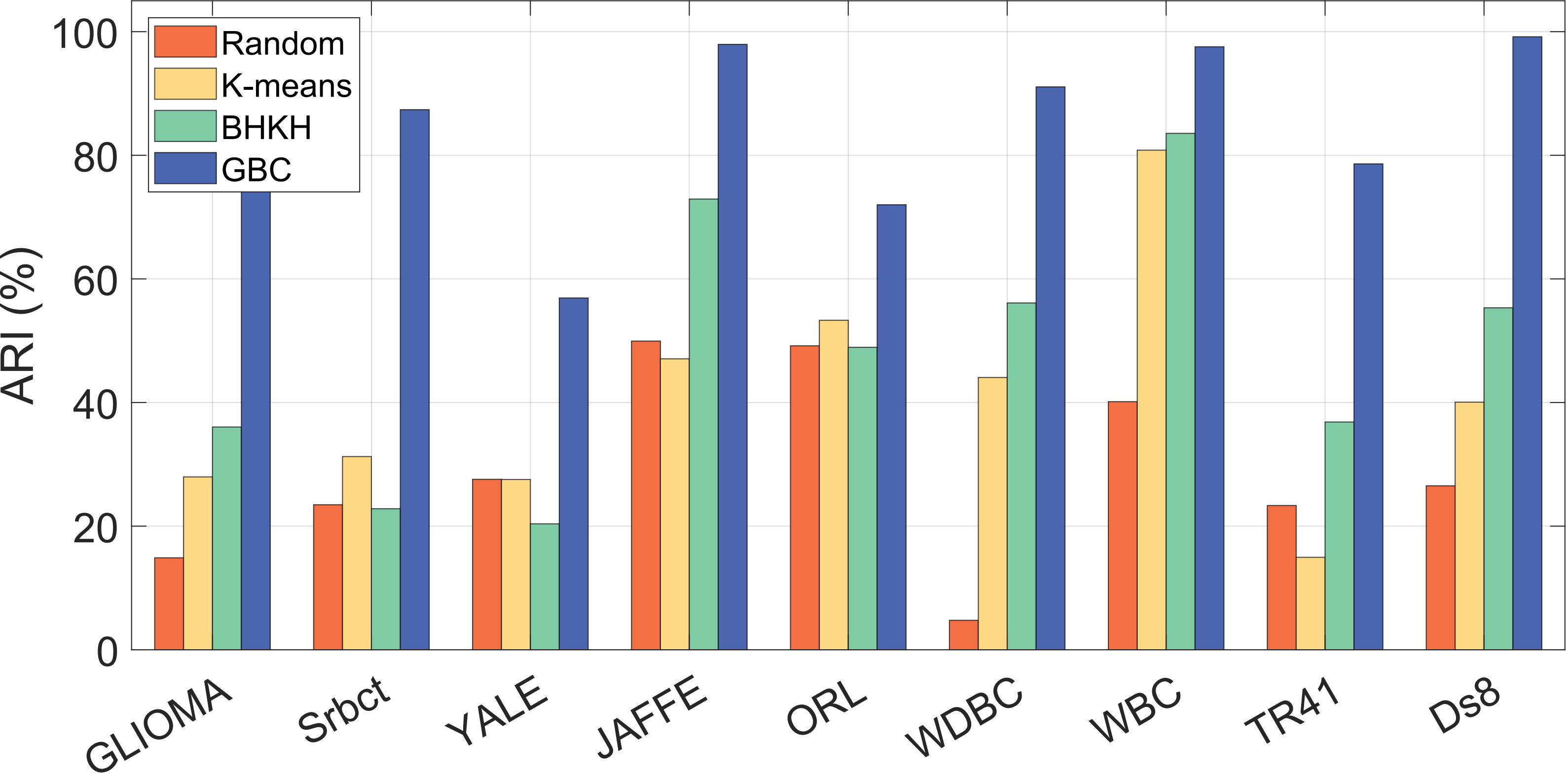}
    \end{subfigure}
    \begin{subfigure}[b]{0.47\textwidth}
    \centering    \includegraphics[width=\textwidth, trim=0 0 0 0, clip]{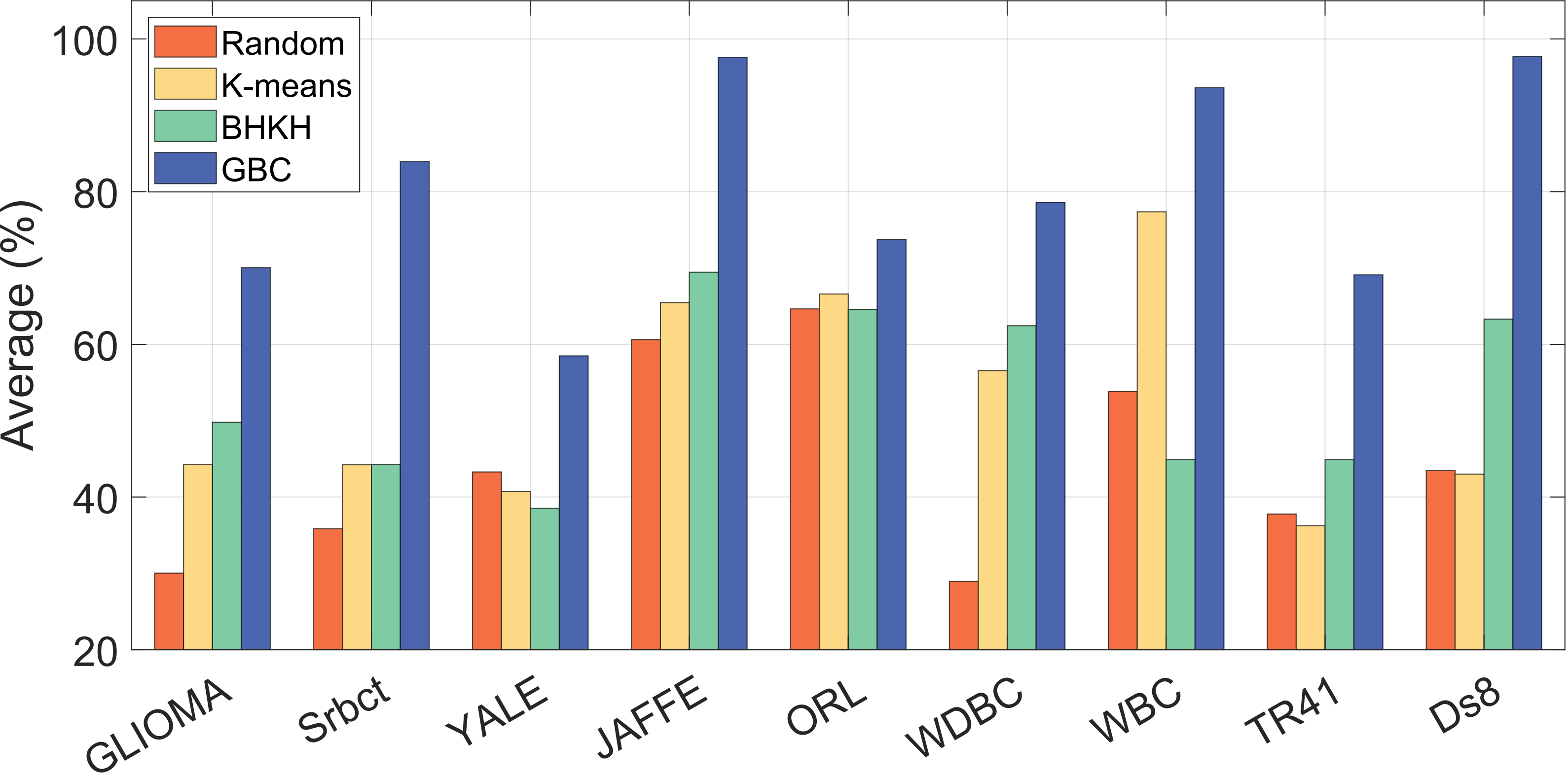}
    \end{subfigure}
    \caption{Comparison of ACC, NMI, ARI, and Average performance between three anchor selection strategies and GBC.} \label{ACC, NMI, ARI and Average}
\end{figure*}

Specifically, compared to SMKKM, the ACC, NMI, and ARI improvements by the proposed method are 6.83\%, 13.93\%, and 5.93\% respectively. 
These improvements are statistically significant as the
paired T-test yields P-values of 0.038, 0.014, and 0.022 for
ACC, NMI, and ARI, respectively. 
Moreover, the improvements are still significant when the proposed granular-ball-induced kernel is introduced into other baselines, such as MKKM and MKKM-SR. 
For example, the average improvements of ACC, NMI, and ARI of GB-MKKM-SR are 5.59\%, 11.79\%, and 5.84\% higher than those of MKKM-SR, respectively, and the upgrades of ACC, NMI, and ARI of GB-MKKM are 1.05\%, 3.47\%, and 1.33\%, respectively. 
These indicate that using granular balls is beneficial in improving the optimization in the multi-kernel spaces. 
This might be explained by the fact that granular balls can capture the data distribution from multiple granularities and exclude potential noisy data by estimated ball boundaries.

We also conducted a statistical analysis regarding ACC using the Friedman Test. 
Friedman test results are
reported in Figure \ref{Friedman Test}. 
Section 3 of the Appendix, the detailed calculation process of the Friedman Test, is added.

\begin{figure}[h!]
    \centering
    \includegraphics[width=0.45\textwidth]{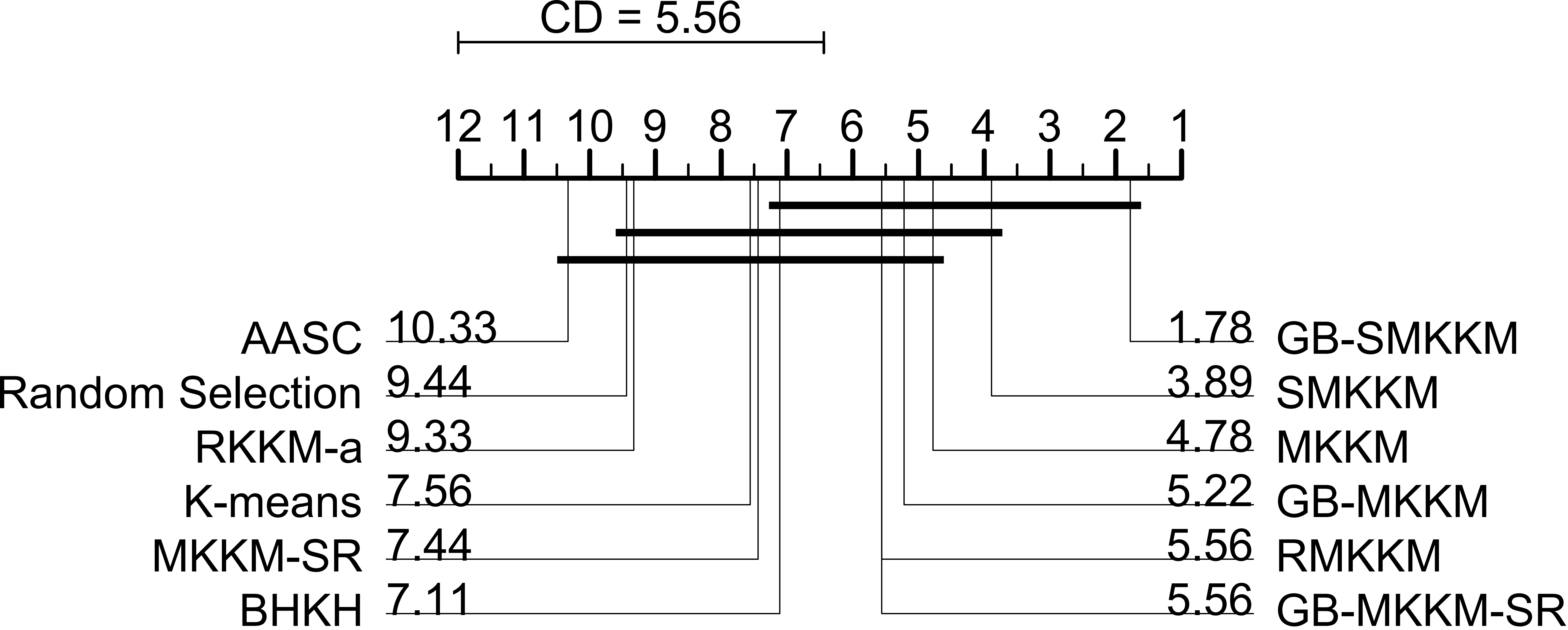}
    \caption{{Comparison on the Friedman test. Groups of methods that are not signiﬁcantly different (at p = 0.05) are connected.}}
    \label{Friedman Test}
\end{figure}

\subsection{Results on Multi-view Clustering}
Here, three multi-view clustering tasks are conducted by comparing a recent strong multi-view baseline SMKC (\cite{liangscalable}[2024], ICML) for comparison. 
Note that SMKC is also built upon the SMKKM, which shares a foundation similar to our model. Results are summarized in Table \ref{SMKC}. 
In the experiments using granular-ball kernels in SMKC, we introduce our variant GB-SMKC. This is achieved by using granular-ball computing to replace their anchor-selection method. 
Also, the number of anchors in SMKC is kept close to the number of granular balls to ensure fairness. 

\begin{table}[t!]
\begin{flushleft} 
\scriptsize
\textbf{Note: }In the table, Caltec101-7 is abbreviated as C101-7.
\end{flushleft}
\centering

\resizebox{\columnwidth}{!}{%
\begin{tabular}{c|ccc|ccc}
\midrule 
\textbf{Method} & \multicolumn{3}{c}{\textbf{GB-SMKC}} & \multicolumn{3}{|c}{\textbf{SMKC}} \\ \midrule 
\textbf{Dataset} & \textbf{Hdigit} & \textbf{Mfeat} & \textbf{C101-7} & \textbf{Hdigit} &\textbf{Mfeat} & \textbf{C101-7}\\ \midrule 
GB/Anchor Counts & \textbf{396} & \textbf{60} & \textbf{30} & 400 & 62 & 31\\
ACC(\%) & \textbf{77.28} & \textbf{94.80} & \textbf{47.83} & 76.63 & 89.40 & 43.69\\ 
NMI(\%) & \textbf{65.48} & \textbf{88.90} & \textbf{38.33} & 64.40 & 80.92 & 36.09\\ 
ARI(\%) & \textbf{77.28} & \textbf{94.80} & \textbf{82.02} & 76.63 & 89.40 & 81.41\\ 
\midrule 
\end{tabular}%
}
\caption{Comparison of GB-SMKC and SMKC.}
\label{SMKC}
\end{table}

As seen in Table \ref{SMKC}, GB-SMKKM still outperforms SMKC in terms of all ACC, NMI, and ARI metrics. Specifically, the ACC of GB-SMKC is higher by 0.56\%, 5.4\%, and 4.14\% on the Handwritten, Mfeat, and Caltech101-7 tasks, respectively. 
This validates the effectiveness and applicability of the GB-induced framework on these multi-view clustering tasks.

\begin{table}[h!] 
\centering
\footnotesize
\resizebox{.46\textwidth}{!}{
\begin{tabular}{lccccc}
\midrule
\textbf{Method} & \textbf{ORL} & \textbf{WDBC} & \textbf{WBC} & \textbf{TR41} & \textbf{Ds8} \\\midrule
RKKM-a        & 0.164         & 0.104       & 0.073         & 0.144         & 0.365 \\
AASC          & 0.942        & 0.216       & 1.772         & 2.962         & 1.524 \\
RMKKM         & 2.338       & 1.605       & 1.031         & 3.769         & 7.713 \\\midrule
MKKM         & 0.067        & 0.104       & 0.116         & 0.152         & 0.644 \\
GB-MKKM     & \textbf{0.021}   & \textbf{0.017} & \textbf{0.008} & \textbf{0.033} & \textbf{0.154} \\\midrule
MKKM-SR       & 0.322       & 0.056       & 0.048         & 0.182         & 0.711 \\
GB-MKKM-SR  & \textbf{0.196}  & \textbf{0.014} & \textbf{0.015} & \textbf{0.047} & \textbf{0.041} \\\midrule
SMKKM         & 0.111          & 0.542       & 2.175         & 0.202         & 6.504 \\
GB-SMKKM   & \textbf{0.050}   & \textbf{0.536} & \textbf{0.220} & \textbf{0.073} & \textbf{1.202} \\
\midrule
\end{tabular}
}
\caption{Running time (s) comparison of the GB-induced framework and six baselines.}
\label{time}
\end{table}

\subsection{Effects of Anchor Selection Strategies}

Granular-ball computing reduces the computational cost by transforming the calculation of point-point relationships into that of ball-ball relationships. 
As it adaptively determines subsets, Granular-ball computing can be regarded as an anchor selection method. 
Compared to other anchor-sampling methods (e.g., Balanced K-means based Hierarchical K-means(BKHK) \cite{zhu2017fast}, K-means \cite{cai2014large}, and random anchor selection \cite{nie2023fast}), granular-ball computing can be better at describing data distribution with complex shapes.  
This is because the generated granular balls adaptively determine multi-granularity subsets based on data distribution characteristics and metric criteria. The ball boundaries can exclude unknown noise points, further improving robustness. 
In this experiment, we compare the above anchor-sampling methods to provide empirical evidence to verify the effectiveness. 

Figure \ref{ACC, NMI, ARI and Average} summarizes the comparison results. 
As can be seen, granular-ball computing is a better choice than all other sampling methods for multi-kernel clustering optimization. 
This is because the procedure of granular-ball generation is based on data distribution. 
The generated balls better fit the original data distribution through the density-based consistency metric. 
Moreover, the ball boundaries can exclude unknown noise points, improving robustness.
Hence, the above results further verify the effectiveness of the proposed granular-ball kernels.

\subsection{Runtime Analysis}   
Table~\ref{time} presents the runtime comparison between the GB-induced framework and six baselines. The results on five representative datasets show that the average speed of GB-MKKM is 0.170 seconds higher than that of MKKM, and the average speed of GB-MKKM-SR is 0.201 seconds higher. Similarly, GB-SMKKM is 1.491 seconds faster than SMKKM respectively.  
Note that granular-ball construction can be done in advance independently of the optimization of different models. Thus, we do not include the construction time here.  
In section 4 of the Appendix, we report both ball generation and model optimization time. The results show that the speed increases as the dataset scale increases. 
Therefore, granular-ball-induced kernel matrix construction scales the MKKM algorithms.  
Meanwhile, in the section 5 of the appendix, convergence graphs of the monotonically decreasing target values of multiple datasets during the iterative process are presented.


\subsection{Analysis of Granular-ball Numbers}

Computing a full $n \times n$ kernel matrix in traditional MKKM methods is computationally expensive. 
The GB-MKKM framework uses granular balls to characterize data distribution.
By reducing the size of the kernel matrix, GB-MKKM minimizes the complexity of subsequent optimization, avoiding modeling the point-to-point dependency inherent in traditional methods. In Figure \ref{number}, we report the number of granular balls adaptively optimized by the granular-ball computing process. 

As can be seen, the granular-ball number is significantly smaller than the original number of sample points on the same datasets. The results indicate that the GB-MKKM framework not only achieves promising high clustering performance, but also effectively reduces the number of sample points. This demonstrates its remarkable flexibility and applicability to various MKKM algorithms, fully proving the ability of granular balls to characterize data in kernel space effectively. 
Other experimental results about the single kernel method and other methods are supplemented in Section 6 of the Appendix.

\begin{figure}[t]
    \centering
    \begin{subfigure}[b]{0.42\textwidth}
    \centering    \includegraphics[width=\textwidth, trim=0 0 0 0, clip]{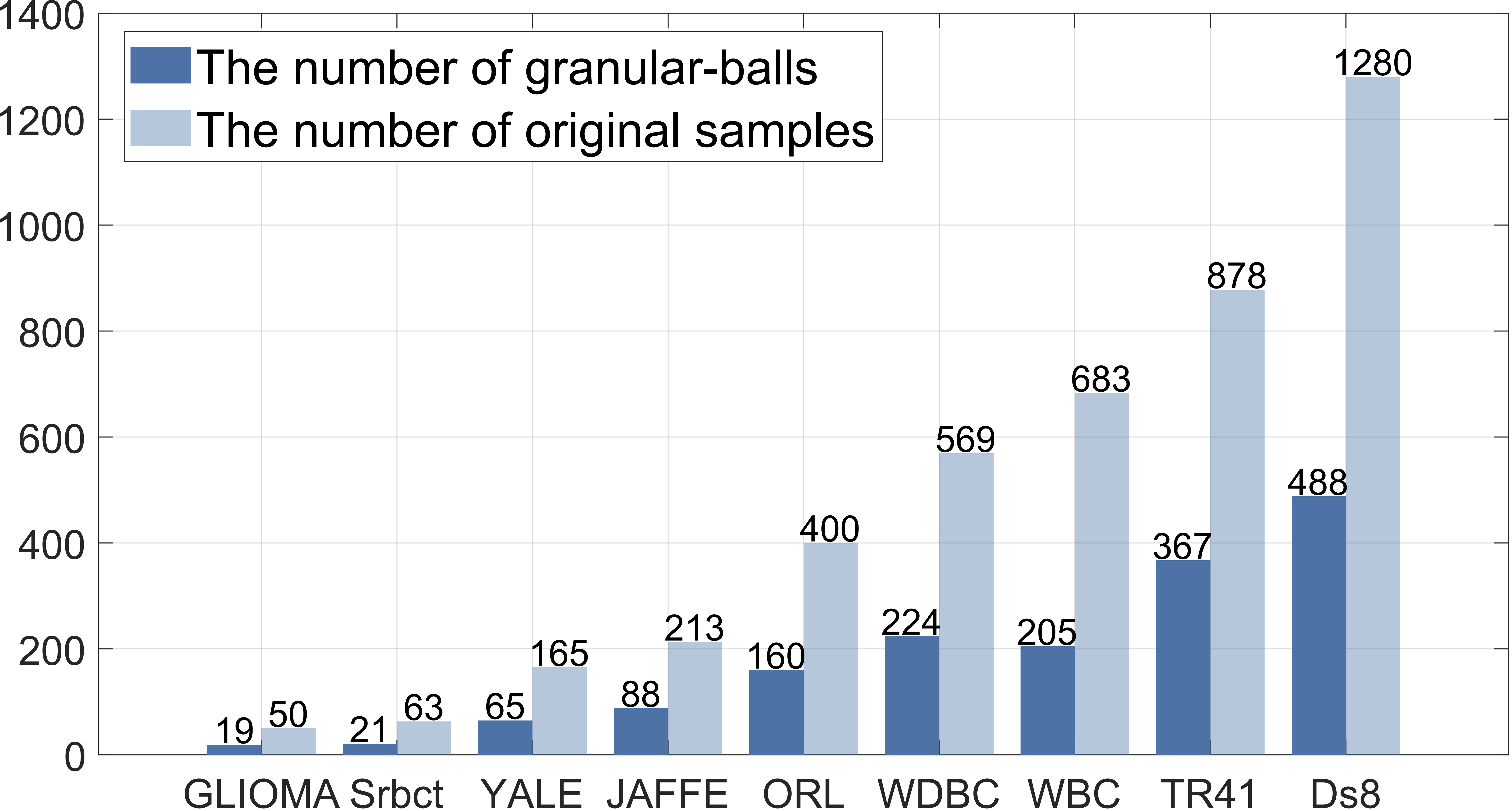}
    \end{subfigure} 
    \caption{Granular-balls count vs. sample size.}
    \label{number}
\end{figure}

\section{Conclusion}

This paper introduces the GB-induced clustering framework, which enhances multi-kernel k-means (MKKM) by incorporating granular-ball computing. Traditional MKKM suffers from high computational costs due to full kernel matrix construction and point-to-point dependencies. To address this, we propose a novel granular-ball representation that models relationships between local data regions (granular balls) instead of individual points. This ball-to-ball approach significantly reduces complexity, improves robustness to noise, and achieves higher clustering accuracy. 
Extensive experiments show that GB-induced and its variants consistently outperform traditional MKKM methods across diverse datasets, demonstrating strong generalization, robustness, and scalability. The framework is broadly applicable to existing MKKM algorithms and can be extended to deep architectures for enhanced multi-kernel clustering in neural networks.


\section*{Acknowledgments}
This research was supported by the Chongqing Graduate Research Innovation Program CYB25259
and the National Natural Science Foundation of China under Grant Nos. 62221005, 62450043, 62222601, and 62176033.

\bibliographystyle{named}
\bibliography{ijcai25}

\end{document}


\title{Supplementary Materials of Granular-Ball-Induced Multiple Kernel K-Means}
\maketitle

\section{Granular-ball Generation Process}
This part presents the process and details of granular-ball update.  First, Figure~\ref{granular1} provided a schematic illustration of granular-ball splitting. Figure~1 illustrates that, for a granular-ball \( GB_i \), if its $\text{CCM}_{gb}$ value exceeds twice the $\text{CCM}_{\text{median}}$ value of all granular-balls in the current splitting stage, the splitting process is terminated. Otherwise, it is further divided into two sub-balls and proceeds to the next splitting stage. The reason for using the median as the threshold criterion is that it provides a robust estimate of the central tendency, which is less sensitive to outliers and skewed distributions, ensuring a more stable and reliable splitting decision. Secondly, Algorithm 1 provides the pseudo-code for granular-ball update.

\begin{algorithm}[H]\footnotesize
    \caption{Granular-Ball Generation Process}
    \label{alg:GBKernel}
    \textbf{Input}: Dataset $\mathcal{D} = \{\mathbf{x}_1, \mathbf{x}_2, \dots, \mathbf{x}_n\}$. \\
    \textbf{Output}:The final $GB$.
    \begin{algorithmic}[1] 
        \STATE Treat the entire dataset as a whole.
        \STATE Coarsely divide the data into initial granular-balls $GB_i (i = 1, 2, \dots, \sqrt{n} )$ by k-means.
        \WHILE{the granular-ball set is not stable}
          \FOR{each $GB_i \in \{GB_1, GB_2, \dots, GB_{\sqrt{n}}\}$}
                \STATE Compute $\text{CCM}_{GB_i}$ using Equation 3 and its median value $\text{CCM}_{\text{median}}$.
                \IF{$\text{CCM}_{GB_i} < 2 \cdot  \text{CCM}_{\text{median}}$}
                    \STATE Call the 2-means algorithm to split $GB_i$ into two sub-balls $GB_{i1}$ and $GB_{i2}$.
                    \IF{$|GB_{i1}| > 1$ and $|GB_{i2}| > 1$}
                        \STATE Add $GB_{i1}$ and $GB_{i2}$ to the granular-ball set.
                    \ELSE
                        \STATE Retain the original $GB_i$.
                    \ENDIF
                \ENDIF
            \ENDFOR
        \ENDWHILE
        \STATE Get the final set of granular-ball.
        \STATE \textbf{return} ${GB}$.
    \end{algorithmic}
\end{algorithm}

\begin{figure}[h!]
    \centering
    \centering  \includegraphics[width=0.5\textwidth, trim=0 0 0 0, clip]{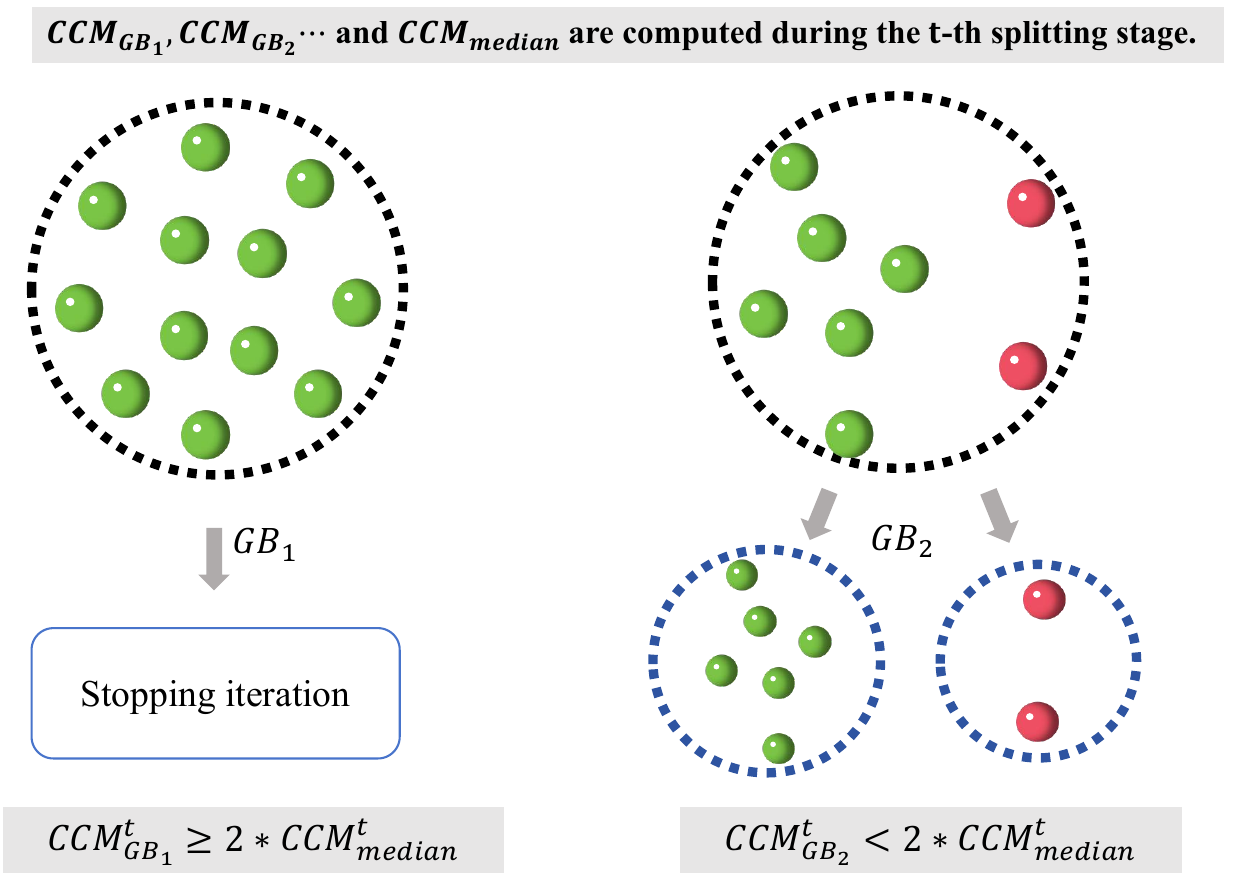}
    \caption{Examples of $\text{CCM}_{gb}$.}
    \label{granular1}
    \vspace{-5pt} 
\end{figure}
\section{Proof of Differentiability and Convexity}

The objective function is expressed as follows:  
\begin{equation}\label{obj}
\Gamma(\sigma)=\max_{\mathbf{H}_{gb} \in \Omega} \operatorname{Tr} \left[ \mathbf{K}^{GB}_{\sigma} ( \mathbf{H}_{gb} \mathbf{H}_{gb}^T - \mathbf{I}_m ) \right].
\end{equation}

Equation (\ref{obj}) can be optimized through numerical optimization methods, leading to a stable and globally optimal solution. Below, specific proofs are provided in Theorem \ref{th1} and Theorem \ref{th2}.
\begin{theorem}\label{th1}
 The objective function $\Gamma(\sigma)$ is differentiable.
\end{theorem}
\begin{proof}
First, since $\mathbf{K}_{\boldsymbol{\sigma}}^{GB} = \sum_{k=1}^\omega \sigma_k^2 {\mathbf{K}}_k^{GB}$ is a linear combination of $\sigma_k$, we have for every $\sigma_k$:
\begin{equation}
\frac{\partial \mathbf{K}_\sigma^{GB}}{\partial \sigma_k} = 2\sigma_k \mathbf{K}_k^{GB},
\end{equation}
which shows that $\mathbf{K}_\sigma^{GB}$ is differentiable with respect to $\sigma_k$.

Second, since $\Gamma(\sigma)$ depends on $\mathbf{K}_\sigma^{GB}$, using the chain rule, we have:
\begin{align}
\frac{\partial \Gamma(\sigma)}{\partial \sigma_k} = \frac{\partial}{\partial \sigma_k} \max_{\mathbf{H}_{gb} \in \Omega} \mathrm{Tr}[\mathbf{K}_\sigma^{GB}(\mathbf{H}_{gb} \mathbf{H}_{gb}^T - \mathbf{I}_m)]\nonumber\\
= \max_{\mathbf{H}_{gb} \in \Omega} \frac{\partial}{\partial \sigma_k} \mathrm{Tr}[\mathbf{K}^{GB}_\sigma(\mathbf{H}_{gb} \mathbf{H}_{gb}^T - \mathbf{I}_m)],\nonumber
\end{align}
and substituting $\mathbf{K}_{\boldsymbol{\sigma}}^{GB} = \sum_{k=1}^\omega \sigma_k^2 {\mathbf{K}}_k^{GB}$, we get:
\begin{equation}
\frac{\partial \Gamma(\sigma)}{\partial \sigma_k} = 2\sigma_k \max_{\mathbf{H}_{gb} \in \Omega} \mathrm{Tr}[\mathbf{K}_k^{GB}(\mathbf{H}_{gb} \mathbf{H}_{gb}^T - \mathbf{I}_m)].
\end{equation}
Therefore, $\Gamma(\sigma)$ is also differentiable with respect to $\sigma_k$, and the form of its partial derivative depends only on the granular-ball kernel matrix $\mathbf{K}_\sigma^{GB}$ and the granular-ball distribution matrix $\mathbf{H}_{gb}$. Theorem 2 indicates that the objective function has sufficient smoothness to guide the update direction and step size of parameter optimization.
\end{proof}

In summary, the objective function $\Gamma(\sigma)$ based on the granular-ball kernel matrix exhibits excellent smoothness and convexity. The multi-granularity representation of granular-balls and the optimization of the parameter $\sigma_k$ provide a more flexible capability for multi-kernel data modeling. 

\begin{theorem}\label{th2}
The objective function $\Gamma(\sigma)$ is convex.
\end{theorem}

\begin{proof}
For $\sigma_1, \sigma_2 \in \Delta$ and $0 < \alpha < 1$, we have:
\begin{align}
    &\scalebox{0.9}{$\Gamma[\alpha \sigma_1 + (1-\alpha)\sigma_2]$}\\\nonumber
    &=\scalebox{0.9}{$ \alpha \Gamma(\sigma_1) + (1-\alpha) \Gamma(\sigma_2)$} \\ \nonumber
    &= \scalebox{0.8}{$\alpha \sigma_1 + (1-\alpha)\sigma_2 \max_{\mathbf{H}_{gb} \in \Omega} \mathrm{Tr}[\mathbf{K}_\sigma^{GB} (\mathbf{H}_{gb}\mathbf{H}_{gb}^T - \mathbf{I}_m)]$} \\\nonumber
    &=\scalebox{0.8}{$\max_{\mathbf{H}_{gb} \in \Omega} \mathrm{Tr} \left[\sum_{k=1}^\omega \left(\alpha \sigma_{1,k}^2 + (1-\alpha)^2 \sigma_{2,k}^2\right) \mathbf{K}_k^{GB}(\mathbf{H}_{gb} \mathbf{H}_{gb}^T - \mathbf{I}_m) \right].$}
\end{align}

Since  $\sigma_{1,k}, \sigma_{2,k} \geq 0$, based on the properties of linear combinations and matrix inequalities in convex optimization, we can obtain:

\begin{align}
&\scalebox{0.65}{$\max_{\mathbf{H}_{gb} \in \Omega} \operatorname{Tr} \Bigg\{ 
\sum_{k=1}^\omega \big[ \alpha^2 \sigma_{1,k}^2 
+ 2\alpha(1{-}\alpha)\sigma_{1,k}\sigma_{2,k} 
+ (1{-}\alpha)^2 \sigma_{2,k}^2 \big] \cdot\ \mathbf{K}_{k}^{GB} 
(\mathbf{H}_{gb} \mathbf{H}_{gb}^T - \mathbf{I}_m) 
\Bigg\}$} \nonumber \\
&\scalebox{0.8}{$\leq \max_{\mathbf{H}_{gb} \in \Omega} \operatorname{Tr} \Bigg\{ 
\sum_{k=1}^\omega \big[ \alpha^2 \sigma_{1,k}^2 
+ (1{-}\alpha)^2 \sigma_{2,k}^2 \big] \cdot\ \mathbf{K}_{k}^{GB}
(\mathbf{H}_{gb} \mathbf{H}_{gb}^T - \mathbf{I}_m) 
\Bigg\} $} \nonumber \\
&\scalebox{0.8}{$\leq \alpha \max_{\mathbf{H}_{gb} \in \Omega} \operatorname{Tr} \Bigg\{ 
\sum_{k=1}^\omega \sigma_{1,k}^2 \mathbf{K}_{k}^{GB}
(\mathbf{H}_{gb} \mathbf{H}_{gb}^T - \mathbf{I}_m) 
\Bigg\}$} \nonumber \\
&\scalebox{0.8}{$+ (1{-}\alpha) \max_{\mathbf{H}_{gb} \in \Omega} \operatorname{Tr} \Bigg\{ 
\sum_{k=1}^\omega \sigma_{2,k}^2 \mathbf{K}_{k}^{GB}
(\mathbf{H}_{gb} \mathbf{H}_{gb}^T - \mathbf{I}_m) 
\Bigg\}$}. \label{eq:inequality}
\end{align}

Simplifying, we derive:
\begin{equation}
\Gamma[\alpha \sigma_1 + (1-\alpha)\sigma_2] \leq \alpha \Gamma(\sigma_1) + (1-\alpha) \Gamma(\sigma_2).
\end{equation}
Thus, $\Gamma(\sigma)$ is convex. Theorem 3 demonstrates that the objective function has a globally optimal solution.
\end{proof}

\section{Supplementary of Full Results.}

\section{Friedman Test}

We employ the well-known Friedman test to compare and analyze the significant differences among 12 algorithms across 9 datasets. The average ranking and accuracy of these algorithms on the 9 datasets are presented in Table~\ref{tab:Friedman test}.

\begin{table}[h!] 
\centering
\caption{Average accuracy and ranks of 12 algorithms on 9 datasets}
\label{tab:Friedman test}
\begin{tabular}{l|c|c}
\toprule
\textbf{Method} & \textbf{Avg.ACC} & \textbf{Avg.rank}  \\\midrule
Random Selection       & 58.84         & 9.44              \\
Kmeans       & 64.73         & 7.56                   \\
BKHK    & 67.21         & 7.11                \\
RKKM-a        & 62.70         & 9.33                    \\
AASC            & 49.48         & 10.33                   \\
RMKKM           & 71.98         & 5.56                   \\
MKKM            & 72.79         & 4.78             \\
GB-MKKM & 73.84         & 5.22                  \\
MKKM-SR        & 66.54         & 7.44             \\
GB-MKKM-SR & 71.93     & 5.56        \\
SMKKM           & 73.71         & 3.89                 \\
GB-SMKKM & 80.54        & 1.78 \\
\bottomrule
\end{tabular}
\end{table}

Next, the formula for Friedman statistical variables is as follows:
\begin{equation}\label{105}
\chi_{F}^{2}=\frac{12n}{H(H+1)}[\sum_{i}^{H}R_{i}^{2}-3n(H+1)]=48.80,
\end{equation}
where $H$ is the number of algorithms and $n$ is the number of the datasets.  $R_{i}$ represents the average ranking of the $i$ algorithm on the 9 data sets. In addition, according to the $\chi_{F}^{2}$ distribution with $H-1$ degrees of freedom.

\begin{equation}\label{106}
F_{F}=\frac{(n-1)\chi_{F}^{2}}{n(H-1)-\chi_{F}^{2}}=7.78,
\end{equation}
where $F_{F}((H-1),(H-1)(n-1))$ obeys the F-distribution, and its degree of freedom is $(H-1)$ and $(H-1)(n-1)$. In this paper, we choose $\alpha=0.05$ and we can get $F_{\alpha}(11,88)=2.07$. Obviously, $F_{F}$ is larger than $F_{\alpha}$, So we can reject the null hypothesis at a confidence level of 95\%, indicating that the 12 algorithms proposed in this paper are significantly different and not random.

Next, we use the Nemenyi post-hoc test to compare the errors of the 12 algorithms. If the average rank difference between two algorithms exceeds the critical value, their performance is considered significantly different. A larger difference indicates a more pronounced performance gap. The calculated threshold is \( q_{\alpha} = 3.268 \), and the critical difference (CD) is computed as follows:

\begin{equation}\label{106}
\scalebox{0.9}{$CD=q_{\alpha=0.05}\sqrt{\frac{H(H+1)}{6n}}=3.268\times\sqrt{\frac{12(12+1)}{12\times9}}=5.56$}.
\end{equation}

\section{Convergence Analysis}  
Convergence serves as a fundamental criterion for validating the theoretical soundness of the proposed algorithm. The convergence behaviors on the four datasets are illustrated in Figure 2.

\begin{figure}[h!]
    \centering
    \begin{subfigure}[b]{0.23\textwidth} 
    \centering    \includegraphics[width=\textwidth]{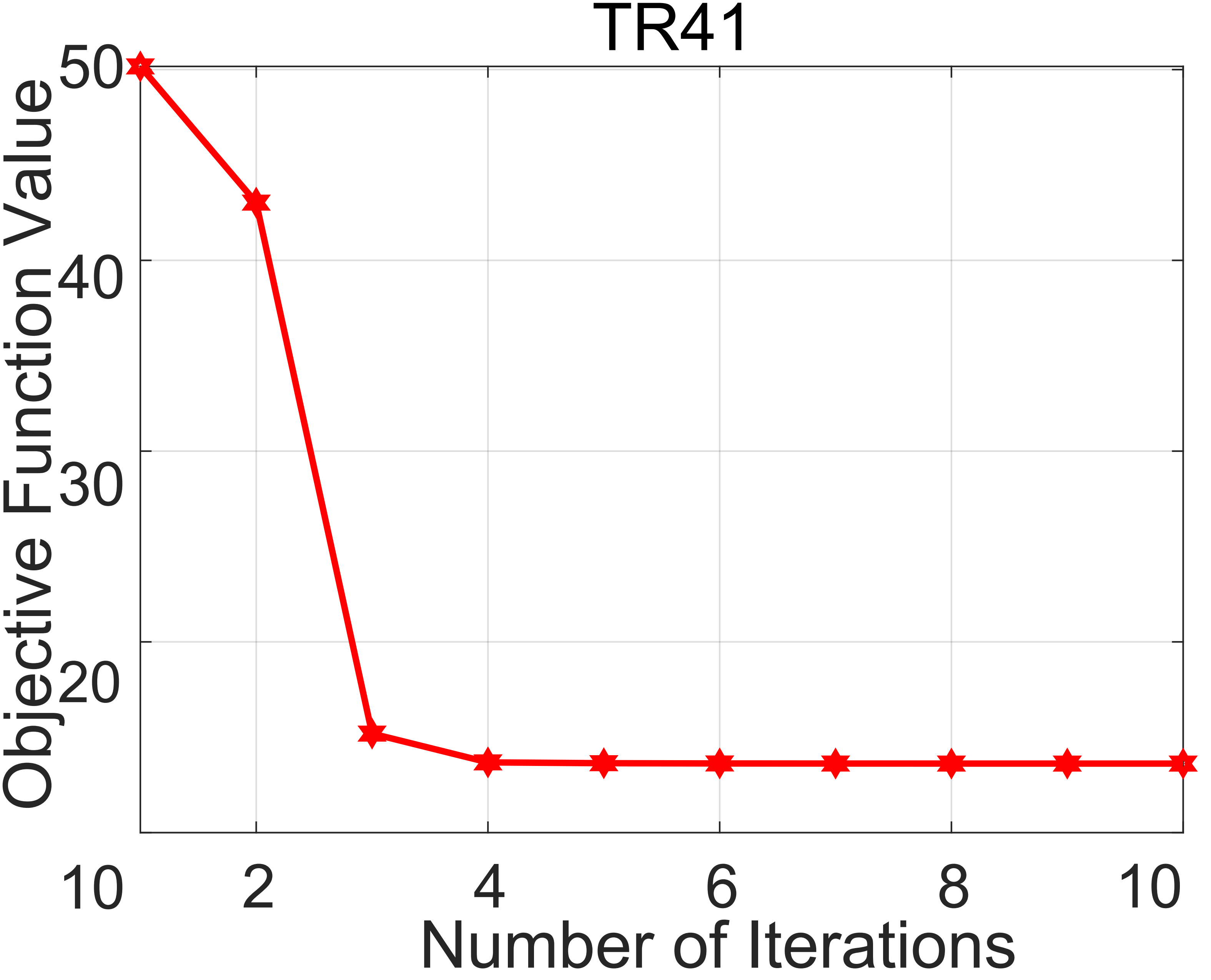}
    \end{subfigure}
    \begin{subfigure}[b]{0.23\textwidth}
    \centering    \includegraphics[width=\textwidth]{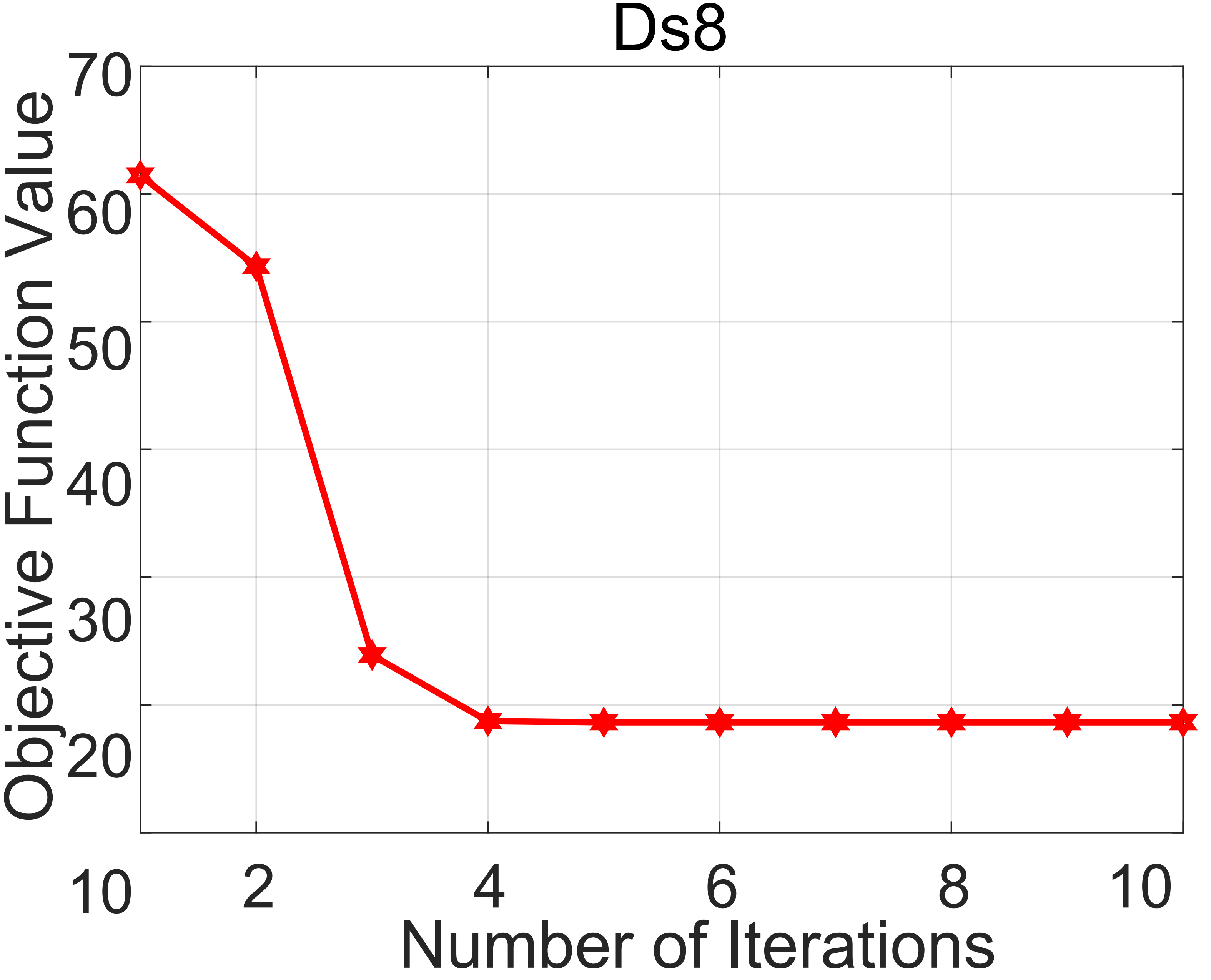}
    \end{subfigure}
    \begin{subfigure}[b]{0.23\textwidth}
    \centering    \includegraphics[width=\textwidth]{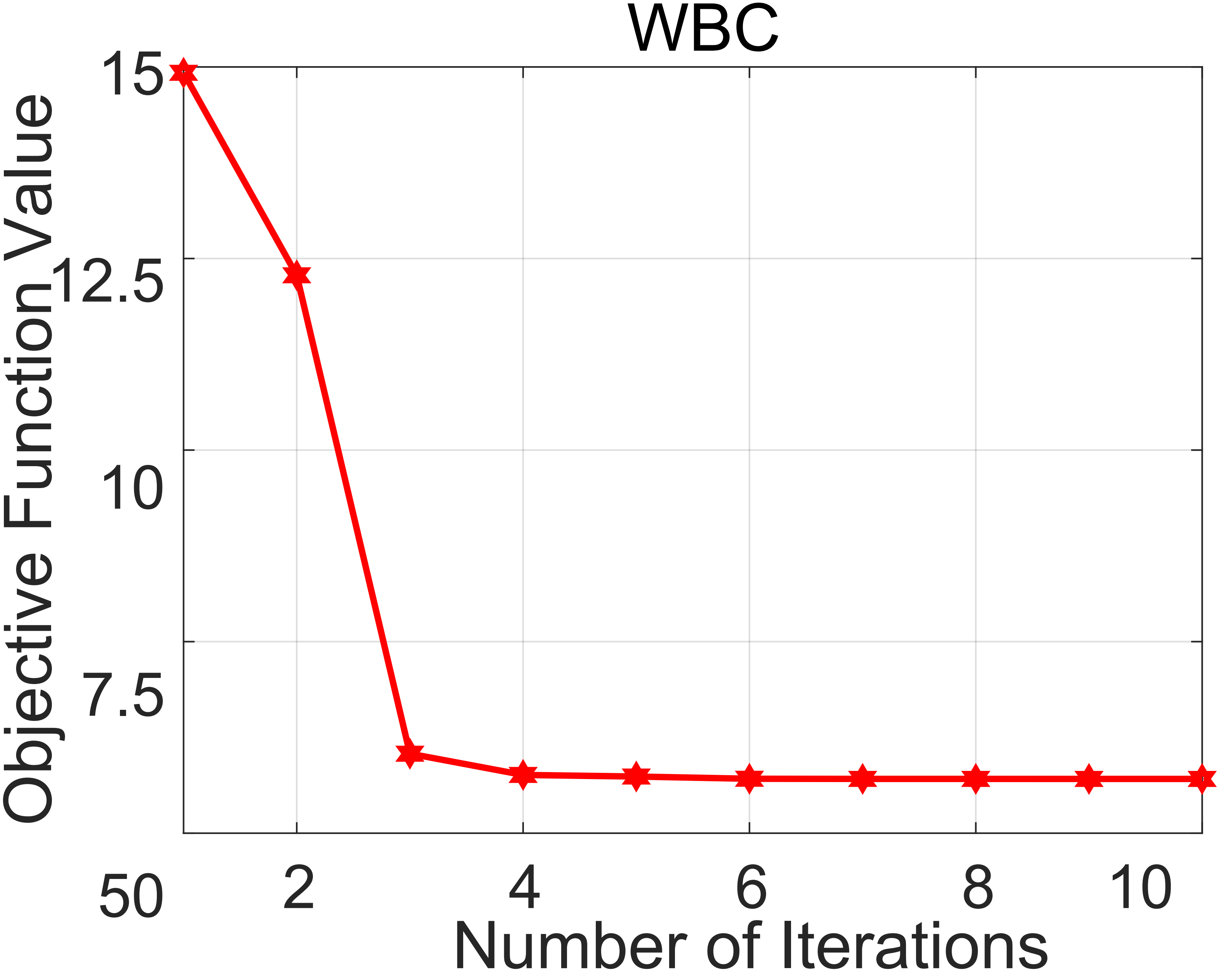}
    \end{subfigure}
    \begin{subfigure}[b]{0.23\textwidth}
    \centering    \includegraphics[width=\textwidth]{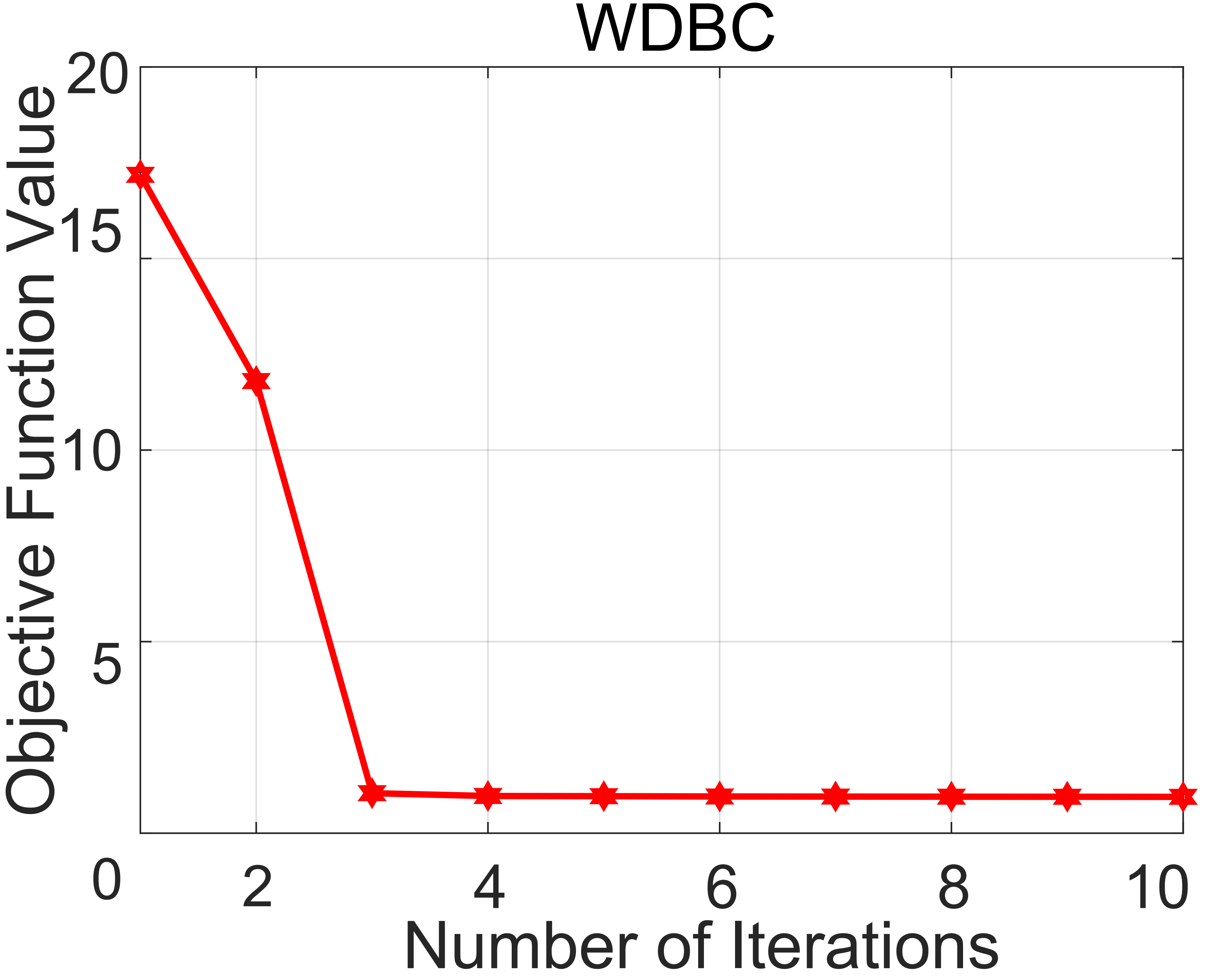}
    \end{subfigure}
    \caption{\small The iteration-wise changes in the objective values of GB-SMKKM on datasets.}
    \label{objective vaule}
\end{figure}

As illustrated in Figure 2, the objective function exhibits a monotonically decreasing trend during the iterative process, which empirically confirms the convergence of the proposed algorithm.

\section{Other Experimental Results }  
\subsection{The Running Time Include Granular-ball Constuction}  
\begin{table}[h!]
\centering
\caption{{Running time (s).}}
\label{tab:granular ball time.}
\renewcommand{\arraystretch}{0.85} 
\resizebox{\linewidth}{!}{
\begin{tabular}{cccccc}
\toprule
\textbf{Dataset} & \textbf{$N$} & \textbf{dims} &\textbf{SMKKM} & \textbf{Ball-generation} & \textbf{GB-SMKKM} \\
\midrule
COIL20 & 1440 & 1024 & 8.06 & 4.12 & 3.84 \\
segment & 2310& 18 & 17.46 & 4.27 & 12.38 \\
misplice-2 & 3175& 2  & 34.12 & 8.98 & 6.52 \\
Phoneme & 5404& 5 & 130.29 & 17.20 & 28.30 (much faster)\\
\bottomrule
\end{tabular}
}
\end{table}
Granular-ball construction is performed as a pre-processing step that is independent of the subsequent model optimization process. Therefore, we do not include its runtime in the main performance comparison.  As shown in Table\~1, both the ball generation and the overall optimization processes remain efficient, and the total runtime improves as the dataset size increases. This demonstrates the scalability and practicality of our method for large-scale applications.

\subsection{Experiments on KKM}
Although our method is primarily designed to enhance and extend the Multiple Kernel k-means (MKKM) framework, we also include Kernel k-means (KKM) in our experiments for completeness and deeper insight. As a classical single-kernel clustering method, KKM provides a strong baseline for evaluating the benefits of multiple kernel integration. Including KKM serves two purposes: (1) it allows us to demonstrate that our improvements are not only effective in the multi-kernel setting, but also outperform strong single-kernel methods; and (2) it helps isolate the specific advantages brought by the granular-ball construction and adaptive integration mechanisms, beyond the general benefits of kernelization. Therefore, supplementing the experimental results with KKM adds value by highlighting the necessity of moving from traditional kernel clustering toward more flexible and robust multi-kernel solutions like ours.
Table \ref{tab:kkm} reports the clustering performance of KKM.
\begin{table}[h!]
\centering
\caption{{GB-KKM and KKM with the linear kernel (regarding ACC).}}
\label{tab:kkm}
\resizebox{\linewidth}{!}{
\begin{tabular}{ccccc}
\toprule
\textbf{Dataset} & \textbf{$N$} & \textbf{dims} &\textbf{KKM} & \textbf{GB-KKM}  \\
\midrule
movement-libras & 360 & 90  & 46.58$\pm$0.027 & 54.23$\pm$0.034  \\
Yeast & 1484 & 7   & 32.91$\pm$0.003 & 43.47$\pm$0.004 \\
misplice-2 & 3175& 2    & 71.62$\pm$0.001& 71.91$\pm$0.008\\
ls & 6435& 36  & 65.01$\pm$0.019 & 66.47$\pm$0.019 \\
\bottomrule
\end{tabular}
}
\end{table}

\begin{table*}[h!] 
\centering
\caption{Comparison results in terms of ACC on nine datasets.}
\label{tab7}
\resizebox{0.8\textwidth}{!}{ 
\begin{tabular}{l|c|c|c|c|c|c|c|c|c|c}
\toprule
\textbf{Method} & \textbf{YALE} & \textbf{JAFFE} & \textbf{ORL}  & \textbf{TR41} & \textbf{Srbct} & \textbf{GLIOMA} & \textbf{WBC} & \textbf{WDBC} & \textbf{Ds8} & \textbf{Average} \\\midrule
Random Selection       & 50.30         & 58.69          & 65.50                 & 48.86         & 47.62          & 46.00          & 81.72        & 62.57         & 68.36        &58.84         \\

Kmeans       & 43.03         & 80.28          & 65.00                 & 55.47         & 52.38          & 64.00          & 81.55        & 87.70         & 53.13        &64.73          \\
BKHK    & 51.52         & 53.05          & 67.25                 & 44.53         & 68.25          & 64.00         & 95.02         &83.66         & 77.58         & 67.21          \\

RKKM-a        & 41.06         & 62.77          & 48.15               & 47.84         & 49.84          & 50.90          & 80.40         & 85.64         & 92.29         & 62.10           \\
AASC            & 40.64         & 30.99          & 27.09              & 19.68         & 34.76          & 53.90          & 76.57         & 62.92         & 98.75         & 49.48           \\
RMKKM           & 52.18         & 87.07          & 55.58               & 62.85         & 53.89          & 59.30          & 96.49         & 86.95         & 93.53         & 71.98           \\
MKKM            & 51.55         & 93.90          & \textbf{68.36}         & 56.51         & 68.65          & 59.70          & 95.51         & 88.05         & 72.88         & 72.79           \\
GB-MKKM & 53.79         & 95.11          & 62.53          & 56.54         & 78.64          & 65.00          & 95.21         & 83.78         & 73.94         & 73.84           \\
MKKM-SR        & 47.27         & 79.34          & 59.75                 & 60.48         & 68.25          & 56.00          & 60.32         & 86.99         & 80.47         & 66.54           \\
GB-MKKM-SR & 47.69     & 96.55          & 65.85                & 52.50         & 68.52          & 52.63          & 95.78         & 87.10         & 80.76         & 71.93           \\
SMKKM           & \textbf{54.30}         & 95.87          & 66.69                 & 56.65         & 66.27          & 59.40          & 96.93         & 88.40         & 78.87         & 73.71           \\
GB-SMKKM & 53.46        & \textbf{97.00} & 65.34                 & \textbf{64.63} & \textbf{82.14} & \textbf{74.47} & \textbf{97.56} & \textbf{91.07} & \textbf{99.18} & \textbf{80.54}  \\
\bottomrule
\end{tabular}
}
\end{table*}

\begin{table*}[h!] 
\centering
\caption{Comparison results in terms of NMI on nine datasets.}
\label{tab8}
\resizebox{0.8\textwidth}{!}{ 
\begin{tabular}{l|c|c|c|c|c|c|c|c|c|c}
\toprule
\textbf{Method} & \textbf{YALE} & \textbf{JAFFE} & \textbf{ORL}  & \textbf{TR41} & \textbf{Srbct} & \textbf{GLIOMA} & \textbf{WBC} & \textbf{WDBC} & \textbf{Ds8} & \textbf{Average} \\\midrule
Random Selection       & 52.00         & 73.26          & 79.27                 & 41.14         & 36.50          & 29.24          & 39.68        & 19.51         & 35.49        &45.12          \\
Kmeans       & 51.63         & 69.05          & 81.49                 & 38.32         & 49.05          & 40.84         & 69.70         &37.92         & 35.84         & 52.65          \\
BKHK       & 43.69         & 82.38          & 77.57                 & 53.39         & 41.74          & 49.37         & 72.91        & 47.55         & 57.01        & 58.40           \\

RKKM-a        & 46.01         & 70.17          & 68.44                 & 42.91         & 29.73          & 28.22          & 41.09         & 44.93         & 82.27         & 50.42           \\
AASC            & 46.83         & 27.08          & 43.65                 & 5.88          & 5.30           & 43.36          & 15.91         & 0.26          & 92.41         & 31.19          \\
RMKKM           & 55.58         & 89.37          & 74.84                  & 63.52         & 31.35          & 49.82          & 76.51         & 46.05         & 76.16         & 62.58           \\
MKKM            & 52.98         & 93.96          & 82.44                 & 57.94         & 50.65          & 40.65          & 82.13         & 47.09         & 58.37         & 62.91           \\
GB-MKKM & \textbf{65.76}         & 94.82          & 82.20                & 60.99         & 69.10          & 53.55          & 73.29         & 35.38         & 62.37         & 66.38           \\
MKKM-SR        & 51.41         & 85.55          & 77.48                  & 61.60         & 55.43          & 36.70          & 2.13          & 41.63         & 52.26         & 51.58           \\
GB-MKKM-SR & 62.08     & 95.76          & 82.92             & 59.16         & 56.45          & 44.08          & 74.76         & 41.16         & 53.94         & 63.37           \\
SMKKM           & 56.11         & 95.18          & 82.06                & 57.57         & 47.85          & 41.20          & 79.71         & 45.86         & 57.71         & 62.58           \\
GB-SMKKM & 65.09        & \textbf{97.80} & \textbf{83.89}         & \textbf{64.07} & \textbf{82.31} & \textbf{61.22}          & \textbf{85.75}         & \textbf{53.67} & \textbf{94.76} & \textbf{76.51}  \\

\bottomrule
\end{tabular}
}
\end{table*}
\vspace{-500mm}

\begin{table*}[h!] 
\centering
\caption{Comparison results in terms of ARI on nine datasets.}
\label{tab9}
\resizebox{0.8\textwidth}{!}{ 
\begin{tabular}{l|c|c|c|c|c|c|c|c|c|c}
\toprule
\textbf{Method} & \textbf{YALE} & \textbf{JAFFE} & \textbf{ORL}  & \textbf{TR41} & \textbf{Srbct} & \textbf{GLIOMA} & \textbf{WBC} & \textbf{WDBC} & \textbf{Ds8} & \textbf{Average} \\\midrule
Random Selection       & 27.58        & 49.94          & 49.18                 & 23.34         & 23.46          & 14.88          & 40.14        & 4.77         & 26.53        &28.87         \\
Kmeans       & 27.55         & 47.07          & 53.31                 & 14.96         & 31.26          & 27.97         & 80.83         &44.05         & 40.06         & 40.78           \\
BKHK        & 20.38         & 72.92          & 48.94                 & 36.85         & 22.82          & 36.04          & 83.55        & 56.11         & 55.33        & 48.10           \\

RKKM-a       & 43.58         & 66.83          & 52.85                  & 63.95         & 56.19          & 53.40          & 80.40         & 85.64         & 96.64         & 66.61           \\
AASC            & 42.33         & 32.51          & 31.49                  & 30.40         & 43.97          & 57.00          & 76.57         & 62.92         & 98.75         & 52.88           \\
RMKKM           & 53.64         & 88.90          & 60.20                  & 77.61         & 56.67          & 64.30          & 96.47         & 86.95         & 93.53         & 75.36           \\
MKKM            & 52.79         & 93.99          & 71.76                 & 74.83         & 70.79          & 62.00          & 97.51         & 88.05         & 92.97         & 78.30           \\
GB-MKKM & 56.44         & 95.29          & 69.26                 & 77.06         & 78.86          & 67.50          & 95.21         & 83.78         & 93.25         & 79.63           \\
MKKM-SR        & 50.30         & 84.04          & 64.00                  & 76.54         & 69.84          & 58.00          & 65.01         & 86.99         & 87.42         & 71.35           \\
GB-MKKM-SR & 49.23     & 95.76          & 70.73                  & 71.94         & 78.95          & 57.89          & 95.78         & 87.10         & 87.37         & 77.19           \\
SMKKM           & 55.00         & 95.87          & 70.55                  & 74.46         & 68.25          & 63.30          & 96.93         & 88.40         & 88.93         & 77.97           \\
GB-SMKKM & \textbf{56.92}        & \textbf{97.95} & \textbf{72.00}             & \textbf{78.61}         & \textbf{87.38} & \textbf{74.47}          & \textbf{97.56} & \textbf{91.07} & \textbf{99.18} & \textbf{83.90}  \\
\bottomrule
\end{tabular}
}
\end{table*}


\title{Supplementary Materials of Granular-Ball-Induced Multiple Kernel K-Means}
\maketitle

\section{Granular-ball Generation Process}
This part presents the process and details of granular-ball update.  First, Figure~\ref{granular1} provided a schematic illustration of granular-ball splitting. Figure~1 illustrates that, for a granular-ball \( GB_i \), if its $\text{CCM}_{gb}$ value exceeds twice the $\text{CCM}_{\text{median}}$ value of all granular-balls in the current splitting stage, the splitting process is terminated. Otherwise, it is further divided into two sub-balls and proceeds to the next splitting stage. The reason for using the median as the threshold criterion is that it provides a robust estimate of the central tendency, which is less sensitive to outliers and skewed distributions, ensuring a more stable and reliable splitting decision. Secondly, Algorithm 1 provides the pseudo-code for granular-ball update.

\begin{algorithm}[H]\footnotesize
    \caption{Granular-Ball Generation Process}
    \label{alg:GBKernel}
    \textbf{Input}: Dataset $\mathcal{D} = \{\mathbf{x}_1, \mathbf{x}_2, \dots, \mathbf{x}_n\}$. \\
    \textbf{Output}:The final $GB$.
    \begin{algorithmic}[1] 
        \STATE Treat the entire dataset as a whole.
        \STATE Coarsely divide the data into initial granular-balls $GB_i (i = 1, 2, \dots, \sqrt{n} )$ by k-means.
        \WHILE{the granular-ball set is not stable}
          \FOR{each $GB_i \in \{GB_1, GB_2, \dots, GB_{\sqrt{n}}\}$}
                \STATE Compute $\text{CCM}_{GB_i}$ using Equation 3 and its median value $\text{CCM}_{\text{median}}$.
                \IF{$\text{CCM}_{GB_i} < 2 \cdot  \text{CCM}_{\text{median}}$}
                    \STATE Call the 2-means algorithm to split $GB_i$ into two sub-balls $GB_{i1}$ and $GB_{i2}$.
                    \IF{$|GB_{i1}| > 1$ and $|GB_{i2}| > 1$}
                        \STATE Add $GB_{i1}$ and $GB_{i2}$ to the granular-ball set.
                    \ELSE
                        \STATE Retain the original $GB_i$.
                    \ENDIF
                \ENDIF
            \ENDFOR
        \ENDWHILE
        \STATE Get the final set of granular-ball.
        \STATE \textbf{return} ${GB}$.
    \end{algorithmic}
\end{algorithm}

\begin{figure}[h!]
    \centering
    \centering  \includegraphics[width=0.5\textwidth, trim=0 0 0 0, clip]{CCM.PDF}
    \caption{Examples of $\text{CCM}_{gb}$.}
    \label{granular1}
    \vspace{-5pt} 
\end{figure}
\section{Proof of Differentiability and Convexity}

The objective function is expressed as follows:  
\begin{equation}\label{obj}
\Gamma(\sigma)=\max_{\mathbf{H}_{gb} \in \Omega} \operatorname{Tr} \left[ \mathbf{K}^{GB}_{\sigma} ( \mathbf{H}_{gb} \mathbf{H}_{gb}^T - \mathbf{I}_m ) \right].
\end{equation}

Equation (\ref{obj}) can be optimized through numerical optimization methods, leading to a stable and globally optimal solution. Below, specific proofs are provided in Theorem \ref{th1} and Theorem \ref{th2}.
\begin{theorem}\label{th1}
 The objective function $\Gamma(\sigma)$ is differentiable.
\end{theorem}
\begin{proof}
First, since $\mathbf{K}_{\boldsymbol{\sigma}}^{GB} = \sum_{k=1}^\omega \sigma_k^2 {\mathbf{K}}_k^{GB}$ is a linear combination of $\sigma_k$, we have for every $\sigma_k$:
\begin{equation}
\frac{\partial \mathbf{K}_\sigma^{GB}}{\partial \sigma_k} = 2\sigma_k \mathbf{K}_k^{GB},
\end{equation}
which shows that $\mathbf{K}_\sigma^{GB}$ is differentiable with respect to $\sigma_k$.

Second, since $\Gamma(\sigma)$ depends on $\mathbf{K}_\sigma^{GB}$, using the chain rule, we have:
\begin{align}
\frac{\partial \Gamma(\sigma)}{\partial \sigma_k} = \frac{\partial}{\partial \sigma_k} \max_{\mathbf{H}_{gb} \in \Omega} \mathrm{Tr}[\mathbf{K}_\sigma^{GB}(\mathbf{H}_{gb} \mathbf{H}_{gb}^T - \mathbf{I}_m)]\nonumber\\
= \max_{\mathbf{H}_{gb} \in \Omega} \frac{\partial}{\partial \sigma_k} \mathrm{Tr}[\mathbf{K}^{GB}_\sigma(\mathbf{H}_{gb} \mathbf{H}_{gb}^T - \mathbf{I}_m)],\nonumber
\end{align}
and substituting $\mathbf{K}_{\boldsymbol{\sigma}}^{GB} = \sum_{k=1}^\omega \sigma_k^2 {\mathbf{K}}_k^{GB}$, we get:
\begin{equation}
\frac{\partial \Gamma(\sigma)}{\partial \sigma_k} = 2\sigma_k \max_{\mathbf{H}_{gb} \in \Omega} \mathrm{Tr}[\mathbf{K}_k^{GB}(\mathbf{H}_{gb} \mathbf{H}_{gb}^T - \mathbf{I}_m)].
\end{equation}
Therefore, $\Gamma(\sigma)$ is also differentiable with respect to $\sigma_k$, and the form of its partial derivative depends only on the granular-ball kernel matrix $\mathbf{K}_\sigma^{GB}$ and the granular-ball distribution matrix $\mathbf{H}_{gb}$. Theorem 2 indicates that the objective function has sufficient smoothness to guide the update direction and step size of parameter optimization.
\end{proof}

In summary, the objective function $\Gamma(\sigma)$ based on the granular-ball kernel matrix exhibits excellent smoothness and convexity. The multi-granularity representation of granular-balls and the optimization of the parameter $\sigma_k$ provide a more flexible capability for multi-kernel data modeling. 

\begin{theorem}\label{th2}
The objective function $\Gamma(\sigma)$ is convex.
\end{theorem}

\begin{proof}
For $\sigma_1, \sigma_2 \in \Delta$ and $0 < \alpha < 1$, we have:
\begin{align}
    &\scalebox{0.9}{$\Gamma[\alpha \sigma_1 + (1-\alpha)\sigma_2]$}\\\nonumber
    &=\scalebox{0.9}{$ \alpha \Gamma(\sigma_1) + (1-\alpha) \Gamma(\sigma_2)$} \\ \nonumber
    &= \scalebox{0.8}{$\alpha \sigma_1 + (1-\alpha)\sigma_2 \max_{\mathbf{H}_{gb} \in \Omega} \mathrm{Tr}[\mathbf{K}_\sigma^{GB} (\mathbf{H}_{gb}\mathbf{H}_{gb}^T - \mathbf{I}_m)]$} \\\nonumber
    &=\scalebox{0.8}{$\max_{\mathbf{H}_{gb} \in \Omega} \mathrm{Tr} \left[\sum_{k=1}^\omega \left(\alpha \sigma_{1,k}^2 + (1-\alpha)^2 \sigma_{2,k}^2\right) \mathbf{K}_k^{GB}(\mathbf{H}_{gb} \mathbf{H}_{gb}^T - \mathbf{I}_m) \right].$}
\end{align}

Since  $\sigma_{1,k}, \sigma_{2,k} \geq 0$, based on the properties of linear combinations and matrix inequalities in convex optimization, we can obtain:

\begin{align}
&\scalebox{0.65}{$\max_{\mathbf{H}_{gb} \in \Omega} \operatorname{Tr} \Bigg\{ 
\sum_{k=1}^\omega \big[ \alpha^2 \sigma_{1,k}^2 
+ 2\alpha(1{-}\alpha)\sigma_{1,k}\sigma_{2,k} 
+ (1{-}\alpha)^2 \sigma_{2,k}^2 \big] \cdot\ \mathbf{K}_{k}^{GB} 
(\mathbf{H}_{gb} \mathbf{H}_{gb}^T - \mathbf{I}_m) 
\Bigg\}$} \nonumber \\
&\scalebox{0.8}{$\leq \max_{\mathbf{H}_{gb} \in \Omega} \operatorname{Tr} \Bigg\{ 
\sum_{k=1}^\omega \big[ \alpha^2 \sigma_{1,k}^2 
+ (1{-}\alpha)^2 \sigma_{2,k}^2 \big] \cdot\ \mathbf{K}_{k}^{GB}
(\mathbf{H}_{gb} \mathbf{H}_{gb}^T - \mathbf{I}_m) 
\Bigg\} $} \nonumber \\
&\scalebox{0.8}{$\leq \alpha \max_{\mathbf{H}_{gb} \in \Omega} \operatorname{Tr} \Bigg\{ 
\sum_{k=1}^\omega \sigma_{1,k}^2 \mathbf{K}_{k}^{GB}
(\mathbf{H}_{gb} \mathbf{H}_{gb}^T - \mathbf{I}_m) 
\Bigg\}$} \nonumber \\
&\scalebox{0.8}{$+ (1{-}\alpha) \max_{\mathbf{H}_{gb} \in \Omega} \operatorname{Tr} \Bigg\{ 
\sum_{k=1}^\omega \sigma_{2,k}^2 \mathbf{K}_{k}^{GB}
(\mathbf{H}_{gb} \mathbf{H}_{gb}^T - \mathbf{I}_m) 
\Bigg\}$}. \label{eq:inequality}
\end{align}

Simplifying, we derive:
\begin{equation}
\Gamma[\alpha \sigma_1 + (1-\alpha)\sigma_2] \leq \alpha \Gamma(\sigma_1) + (1-\alpha) \Gamma(\sigma_2).
\end{equation}
Thus, $\Gamma(\sigma)$ is convex. Theorem 3 demonstrates that the objective function has a globally optimal solution.
\end{proof}

\section{Supplementary of Full Results.}

\section{Friedman Test}

We employ the well-known Friedman test to compare and analyze the significant differences among 12 algorithms across 9 datasets. The average ranking and accuracy of these algorithms on the 9 datasets are presented in Table~\ref{tab:Friedman test}.

\begin{table}[h!] 
\centering
\caption{Average accuracy and ranks of 12 algorithms on 9 datasets}
\label{tab:Friedman test}
\begin{tabular}{l|c|c}
\toprule
\textbf{Method} & \textbf{Avg.ACC} & \textbf{Avg.rank}  \\\midrule
Random Selection       & 58.84         & 9.44              \\
Kmeans       & 64.73         & 7.56                   \\
BKHK    & 67.21         & 7.11                \\
RKKM-a        & 62.70         & 9.33                    \\
AASC            & 49.48         & 10.33                   \\
RMKKM           & 71.98         & 5.56                   \\
MKKM            & 72.79         & 4.78             \\
GB-MKKM & 73.84         & 5.22                  \\
MKKM-SR        & 66.54         & 7.44             \\
GB-MKKM-SR & 71.93     & 5.56        \\
SMKKM           & 73.71         & 3.89                 \\
GB-SMKKM & 80.54        & 1.78 \\
\bottomrule
\end{tabular}
\end{table}

Next, the formula for Friedman statistical variables is as follows:
\begin{equation}\label{105}
\chi_{F}^{2}=\frac{12n}{H(H+1)}[\sum_{i}^{H}R_{i}^{2}-3n(H+1)]=48.80,
\end{equation}
where $H$ is the number of algorithms and $n$ is the number of the datasets.  $R_{i}$ represents the average ranking of the $i$ algorithm on the 9 data sets. In addition, according to the $\chi_{F}^{2}$ distribution with $H-1$ degrees of freedom.

\begin{equation}\label{106}
F_{F}=\frac{(n-1)\chi_{F}^{2}}{n(H-1)-\chi_{F}^{2}}=7.78,
\end{equation}
where $F_{F}((H-1),(H-1)(n-1))$ obeys the F-distribution, and its degree of freedom is $(H-1)$ and $(H-1)(n-1)$. In this paper, we choose $\alpha=0.05$ and we can get $F_{\alpha}(11,88)=2.07$. Obviously, $F_{F}$ is larger than $F_{\alpha}$, So we can reject the null hypothesis at a confidence level of 95\%, indicating that the 12 algorithms proposed in this paper are significantly different and not random.

Next, we use the Nemenyi post-hoc test to compare the errors of the 12 algorithms. If the average rank difference between two algorithms exceeds the critical value, their performance is considered significantly different. A larger difference indicates a more pronounced performance gap. The calculated threshold is \( q_{\alpha} = 3.268 \), and the critical difference (CD) is computed as follows:

\begin{equation}\label{106}
\scalebox{0.9}{$CD=q_{\alpha=0.05}\sqrt{\frac{H(H+1)}{6n}}=3.268\times\sqrt{\frac{12(12+1)}{12\times9}}=5.56$}.
\end{equation}

\section{Convergence Analysis}  
Convergence serves as a fundamental criterion for validating the theoretical soundness of the proposed algorithm. The convergence behaviors on the four datasets are illustrated in Figure 2.

\begin{figure}[h!]
    \centering
    \begin{subfigure}[b]{0.23\textwidth} 
    \centering    \includegraphics[width=\textwidth]{TR41.png}
    \end{subfigure}
    \begin{subfigure}[b]{0.23\textwidth}
    \centering    \includegraphics[width=\textwidth]{Ds8.png}
    \end{subfigure}
    \begin{subfigure}[b]{0.23\textwidth}
    \centering    \includegraphics[width=\textwidth]{WBC.png}
    \end{subfigure}
    \begin{subfigure}[b]{0.23\textwidth}
    \centering    \includegraphics[width=\textwidth]{WDBC.png}
    \end{subfigure}
    \caption{\small The iteration-wise changes in the objective values of GB-SMKKM on datasets.}
    \label{objective vaule}
\end{figure}

As illustrated in Figure 2, the objective function exhibits a monotonically decreasing trend during the iterative process, which empirically confirms the convergence of the proposed algorithm.

\section{Other Experimental Results }  
\subsection{The Running Time Include Granular-ball Constuction}  
\begin{table}[h!]
\centering
\caption{{Running time (s).}}
\label{tab:granular ball time.}
\renewcommand{\arraystretch}{0.85} 
\resizebox{\linewidth}{!}{
\begin{tabular}{cccccc}
\toprule
\textbf{Dataset} & \textbf{$N$} & \textbf{dims} &\textbf{SMKKM} & \textbf{Ball-generation} & \textbf{GB-SMKKM} \\
\midrule
COIL20 & 1440 & 1024 & 8.06 & 4.12 & 3.84 \\
segment & 2310& 18 & 17.46 & 4.27 & 12.38 \\
misplice-2 & 3175& 2  & 34.12 & 8.98 & 6.52 \\
Phoneme & 5404& 5 & 130.29 & 17.20 & 28.30 (much faster)\\
\bottomrule
\end{tabular}
}
\end{table}
Granular-ball construction is performed as a pre-processing step that is independent of the subsequent model optimization process. Therefore, we do not include its runtime in the main performance comparison.  As shown in Table\~1, both the ball generation and the overall optimization processes remain efficient, and the total runtime improves as the dataset size increases. This demonstrates the scalability and practicality of our method for large-scale applications.

\subsection{Experiments on KKM}
Although our method is primarily designed to enhance and extend the Multiple Kernel k-means (MKKM) framework, we also include Kernel k-means (KKM) in our experiments for completeness and deeper insight. As a classical single-kernel clustering method, KKM provides a strong baseline for evaluating the benefits of multiple kernel integration. Including KKM serves two purposes: (1) it allows us to demonstrate that our improvements are not only effective in the multi-kernel setting, but also outperform strong single-kernel methods; and (2) it helps isolate the specific advantages brought by the granular-ball construction and adaptive integration mechanisms, beyond the general benefits of kernelization. Therefore, supplementing the experimental results with KKM adds value by highlighting the necessity of moving from traditional kernel clustering toward more flexible and robust multi-kernel solutions like ours.
Table \ref{tab:kkm} reports the clustering performance of KKM.
\begin{table}[h!]
\centering
\caption{{GB-KKM and KKM with the linear kernel (regarding ACC).}}
\label{tab:kkm}
\resizebox{\linewidth}{!}{
\begin{tabular}{ccccc}
\toprule
\textbf{Dataset} & \textbf{$N$} & \textbf{dims} &\textbf{KKM} & \textbf{GB-KKM}  \\
\midrule
movement-libras & 360 & 90  & 46.58$\pm$0.027 & 54.23$\pm$0.034  \\
Yeast & 1484 & 7   & 32.91$\pm$0.003 & 43.47$\pm$0.004 \\
misplice-2 & 3175& 2    & 71.62$\pm$0.001& 71.91$\pm$0.008\\
ls & 6435& 36  & 65.01$\pm$0.019 & 66.47$\pm$0.019 \\
\bottomrule
\end{tabular}
}
\end{table}

\begin{table*}[h!] 
\centering
\caption{Comparison results in terms of ACC on nine datasets.}
\label{tab7}
\resizebox{0.8\textwidth}{!}{ 
\begin{tabular}{l|c|c|c|c|c|c|c|c|c|c}
\toprule
\textbf{Method} & \textbf{YALE} & \textbf{JAFFE} & \textbf{ORL}  & \textbf{TR41} & \textbf{Srbct} & \textbf{GLIOMA} & \textbf{WBC} & \textbf{WDBC} & \textbf{Ds8} & \textbf{Average} \\\midrule
Random Selection       & 50.30         & 58.69          & 65.50                 & 48.86         & 47.62          & 46.00          & 81.72        & 62.57         & 68.36        &58.84         \\

Kmeans       & 43.03         & 80.28          & 65.00                 & 55.47         & 52.38          & 64.00          & 81.55        & 87.70         & 53.13        &64.73          \\
BKHK    & 51.52         & 53.05          & 67.25                 & 44.53         & 68.25          & 64.00         & 95.02         &83.66         & 77.58         & 67.21          \\

RKKM-a        & 41.06         & 62.77          & 48.15               & 47.84         & 49.84          & 50.90          & 80.40         & 85.64         & 92.29         & 62.10           \\
AASC            & 40.64         & 30.99          & 27.09              & 19.68         & 34.76          & 53.90          & 76.57         & 62.92         & 98.75         & 49.48           \\
RMKKM           & 52.18         & 87.07          & 55.58               & 62.85         & 53.89          & 59.30          & 96.49         & 86.95         & 93.53         & 71.98           \\
MKKM            & 51.55         & 93.90          & \textbf{68.36}         & 56.51         & 68.65          & 59.70          & 95.51         & 88.05         & 72.88         & 72.79           \\
GB-MKKM & 53.79         & 95.11          & 62.53          & 56.54         & 78.64          & 65.00          & 95.21         & 83.78         & 73.94         & 73.84           \\
MKKM-SR        & 47.27         & 79.34          & 59.75                 & 60.48         & 68.25          & 56.00          & 60.32         & 86.99         & 80.47         & 66.54           \\
GB-MKKM-SR & 47.69     & 96.55          & 65.85                & 52.50         & 68.52          & 52.63          & 95.78         & 87.10         & 80.76         & 71.93           \\
SMKKM           & \textbf{54.30}         & 95.87          & 66.69                 & 56.65         & 66.27          & 59.40          & 96.93         & 88.40         & 78.87         & 73.71           \\
GB-SMKKM & 53.46        & \textbf{97.00} & 65.34                 & \textbf{64.63} & \textbf{82.14} & \textbf{74.47} & \textbf{97.56} & \textbf{91.07} & \textbf{99.18} & \textbf{80.54}  \\
\bottomrule
\end{tabular}
}
\end{table*}

\begin{table*}[h!] 
\centering
\caption{Comparison results in terms of NMI on nine datasets.}
\label{tab8}
\resizebox{0.8\textwidth}{!}{ 
\begin{tabular}{l|c|c|c|c|c|c|c|c|c|c}
\toprule
\textbf{Method} & \textbf{YALE} & \textbf{JAFFE} & \textbf{ORL}  & \textbf{TR41} & \textbf{Srbct} & \textbf{GLIOMA} & \textbf{WBC} & \textbf{WDBC} & \textbf{Ds8} & \textbf{Average} \\\midrule
Random Selection       & 52.00         & 73.26          & 79.27                 & 41.14         & 36.50          & 29.24          & 39.68        & 19.51         & 35.49        &45.12          \\
Kmeans       & 51.63         & 69.05          & 81.49                 & 38.32         & 49.05          & 40.84         & 69.70         &37.92         & 35.84         & 52.65          \\
BKHK       & 43.69         & 82.38          & 77.57                 & 53.39         & 41.74          & 49.37         & 72.91        & 47.55         & 57.01        & 58.40           \\

RKKM-a        & 46.01         & 70.17          & 68.44                 & 42.91         & 29.73          & 28.22          & 41.09         & 44.93         & 82.27         & 50.42           \\
AASC            & 46.83         & 27.08          & 43.65                 & 5.88          & 5.30           & 43.36          & 15.91         & 0.26          & 92.41         & 31.19          \\
RMKKM           & 55.58         & 89.37          & 74.84                  & 63.52         & 31.35          & 49.82          & 76.51         & 46.05         & 76.16         & 62.58           \\
MKKM            & 52.98         & 93.96          & 82.44                 & 57.94         & 50.65          & 40.65          & 82.13         & 47.09         & 58.37         & 62.91           \\
GB-MKKM & \textbf{65.76}         & 94.82          & 82.20                & 60.99         & 69.10          & 53.55          & 73.29         & 35.38         & 62.37         & 66.38           \\
MKKM-SR        & 51.41         & 85.55          & 77.48                  & 61.60         & 55.43          & 36.70          & 2.13          & 41.63         & 52.26         & 51.58           \\
GB-MKKM-SR & 62.08     & 95.76          & 82.92             & 59.16         & 56.45          & 44.08          & 74.76         & 41.16         & 53.94         & 63.37           \\
SMKKM           & 56.11         & 95.18          & 82.06                & 57.57         & 47.85          & 41.20          & 79.71         & 45.86         & 57.71         & 62.58           \\
GB-SMKKM & 65.09        & \textbf{97.80} & \textbf{83.89}         & \textbf{64.07} & \textbf{82.31} & \textbf{61.22}          & \textbf{85.75}         & \textbf{53.67} & \textbf{94.76} & \textbf{76.51}  \\

\bottomrule
\end{tabular}
}
\end{table*}
\vspace{-500mm}

\begin{table*}[h!] 
\centering
\caption{Comparison results in terms of ARI on nine datasets.}
\label{tab9}
\resizebox{0.8\textwidth}{!}{ 
\begin{tabular}{l|c|c|c|c|c|c|c|c|c|c}
\toprule
\textbf{Method} & \textbf{YALE} & \textbf{JAFFE} & \textbf{ORL}  & \textbf{TR41} & \textbf{Srbct} & \textbf{GLIOMA} & \textbf{WBC} & \textbf{WDBC} & \textbf{Ds8} & \textbf{Average} \\\midrule
Random Selection       & 27.58        & 49.94          & 49.18                 & 23.34         & 23.46          & 14.88          & 40.14        & 4.77         & 26.53        &28.87         \\
Kmeans       & 27.55         & 47.07          & 53.31                 & 14.96         & 31.26          & 27.97         & 80.83         &44.05         & 40.06         & 40.78           \\
BKHK        & 20.38         & 72.92          & 48.94                 & 36.85         & 22.82          & 36.04          & 83.55        & 56.11         & 55.33        & 48.10           \\

RKKM-a       & 43.58         & 66.83          & 52.85                  & 63.95         & 56.19          & 53.40          & 80.40         & 85.64         & 96.64         & 66.61           \\
AASC            & 42.33         & 32.51          & 31.49                  & 30.40         & 43.97          & 57.00          & 76.57         & 62.92         & 98.75         & 52.88           \\
RMKKM           & 53.64         & 88.90          & 60.20                  & 77.61         & 56.67          & 64.30          & 96.47         & 86.95         & 93.53         & 75.36           \\
MKKM            & 52.79         & 93.99          & 71.76                 & 74.83         & 70.79          & 62.00          & 97.51         & 88.05         & 92.97         & 78.30           \\
GB-MKKM & 56.44         & 95.29          & 69.26                 & 77.06         & 78.86          & 67.50          & 95.21         & 83.78         & 93.25         & 79.63           \\
MKKM-SR        & 50.30         & 84.04          & 64.00                  & 76.54         & 69.84          & 58.00          & 65.01         & 86.99         & 87.42         & 71.35           \\
GB-MKKM-SR & 49.23     & 95.76          & 70.73                  & 71.94         & 78.95          & 57.89          & 95.78         & 87.10         & 87.37         & 77.19           \\
SMKKM           & 55.00         & 95.87          & 70.55                  & 74.46         & 68.25          & 63.30          & 96.93         & 88.40         & 88.93         & 77.97           \\
GB-SMKKM & \textbf{56.92}        & \textbf{97.95} & \textbf{72.00}             & \textbf{78.61}         & \textbf{87.38} & \textbf{74.47}          & \textbf{97.56} & \textbf{91.07} & \textbf{99.18} & \textbf{83.90}  \\
\bottomrule
\end{tabular}
}
\end{table*}